\renewenvironment{myproof}[1]
{\par\noindent\textbf{Proof of #1.}\ \enspace\ignorespaces\begin{allowdisplaybreaks}}
{\end{allowdisplaybreaks}\hspace{\stretch{1}}$\square$}
\renewenvironment{myproofd}
{\par\noindent\textbf{Proof.}\ \enspace\ignorespaces\begin{allowdisplaybreaks}}
{\end{allowdisplaybreaks}\hspace{\stretch{1}}$\square$}
\newcommand{\bmx}{\bm{x}}
\newcommand{\bmy}{\bm{y}}
\newcommand{\bmz}{\bm{z}}
\newcommand{\bmr}{\bm{r}}
\newcommand{\bmf}{\bm{f}}
\newcommand{\ojzj}{OneJumpZeroJump}
\newcommand{\rrmo}{bi-objective RealRoyalRoad}
\newcommand{\RRMO}{Bi-objective RealRoyalRoad}
\newcommand{\lz}{\textsc{LZ}}
\newcommand{\tz}{\textsc{TZ}}
\newcommand\expct{\mathbb{E}}
\newcommand{\smspopdate}{\textsc{Population Update of SMS-EMOA}}
\newcommand{\smsnonpopdate}{\textsc{Stochastic Population Update of SMS-EMOA}}
\newcommand{\nsgapopdate}{\textsc{Population Update of NSGA-II}}
\newcommand{\nsganonpopdate}{\textsc{Stochastic Population Update of NSGA-II}}
\begin{document}

\begin{frontmatter}

\title{Stochastic Population Update Can Provably Be Helpful\\ in Multi-Objective Evolutionary Algorithms$^\dagger$\protect\footnote{$^\dagger$A preliminary version of this paper has appeared at IJCAI’23~\cite{bian23stochastic}.}}

\author{Chao Bian$^{1}$}                  
\ead{bianc@lamda.nju.edu.cn}
\author{Yawen Zhou$^{1}$}                  
\ead{zhouyw@lamda.nju.edu.cn}
\author{Miqing Li$^{2}$}
\ead{m.li.8@bham.ac.uk}
\author{Chao Qian$^{1*}$}
\ead{qianc@lamda.nju.edu.cn}
\cortext[]{Corresponding author}
\address{$^{1}$
National Key Laboratory for Novel Software Technology, Nanjing University, Nanjing 210023, China\\
School of Artificial Intelligence, Nanjing University, Nanjing 210023, China
 \\\vspace{0.5em}
$^{2}$School of Computer Science, University of Birmingham,
Birmingham B15 2TT, U.K.}

\begin{abstract}
Evolutionary algorithms (EAs) have been widely and successfully applied to solve multi-objective optimization problems, due to their nature of population-based search. Population update, a key component in multi-objective EAs (MOEAs), is usually performed in a greedy, deterministic manner. That is, the next-generation population is formed by selecting the best solutions from the current population and newly-generated solutions (irrespective of the selection criteria used such as Pareto dominance, crowdedness and indicators). In this paper, we analytically present that stochastic population update can be beneficial for the search of MOEAs. Specifically, we prove that the expected running time of two well-established MOEAs, SMS-EMOA and NSGA-II, for solving two bi-objective problems, OneJumpZeroJump and bi-objective RealRoyalRoad, can be exponentially decreased if replacing its deterministic population update mechanism by a stochastic one. Empirical studies also verify the effectiveness of the proposed population update method. This work is an attempt to show the benefit of introducing randomness into the population update of MOEAs. Its positive results, which might hold more generally, should encourage the exploration of developing new MOEAs in the area.
\end{abstract}


\end{frontmatter}

\section{Introduction}

Multi-objective optimization refers to an optimization scenario that there is more than one objective to be considered at the same time. Such scenarios are very common in real-world applications. For example, in neural architecture search~\cite{lu2019nsga}, researchers and practitioners may try to find an architecture with higher accuracy and lower complexity; in industrial manufacturing~\cite{pahl1996engineering}, production managers would like to devise a product with higher quality and lower cost; in financial investment~\cite{markowitz1952portfolio}, investment managers are keen to build a portfolio with higher return and lower risk.
Since the objectives of a multi-objective optimization problem (MOP) are usually conflicting,  there does not exist a single optimal solution,  but instead a set of solutions which represent different trade-offs between these objectives, called Pareto optimal solutions. The images of all Pareto optimal solutions of a MOP in the objective space are called the Pareto front. In multi-objective optimization, the goal of an optimizer is to find the Pareto front or a good approximation of the Pareto front.

Evolutionary algorithms (EAs)~\cite{back:96,eiben2015introduction} are a large class of randomized heuristic optimization algorithms inspired by natural evolution. They maintain a set of solutions, i.e., a population, and iteratively improve it by generating new solutions and replacing inferior ones. Due to the population-based nature, EAs are well-suited to solving MOPs, and have been widely used in various real-world scenarios \cite{deb2001book,qian19el,coello2007evolutionary,zhou2011survey} including machine learning~\cite{wu2023bi,liang2024evolutionary}, aerodynamic design~\cite{obayashi1998transonic}, vaccine design~\cite{liu2024peptide}, migrant resettlement~\cite{liu2024migrant}, finance~\cite{yang2024reducing}, etc. In fact, there have been many well-known, widely used multi-objective EAs (MOEAs), such as non-dominated sorting genetic algorithm II (NSGA-II)~\cite{deb-tec02-nsgaii} and III (NSGA-III)~\cite{deb2014nsgaiii}, $\mathcal{S}$ metric selection evolutionary multi-objective optimization algorithm (SMS-EMOA)~\cite{beume2007sms}, strength Pareto evolutionary algorithm 2 (SPEA2)~\cite{zitzler2001spea2}, and multi-objective evolutionary algorithm based on decomposition (MOEA/D)~\cite{zhang2007moea}.

In MOEAs, two key components are solution generation and population update. Solution generation  is concerned with parent selection and reproduction (e.g., crossover and mutation), while population update (also called environmental selection or population maintenance) is concerned with maintaining a population which represents diverse high-quality solutions found, served as a pool for generating even better solutions. In the evolutionary multi-objective optimization area, the research focus is mainly on population update. That is, when designing an MOEA, attention is predominantly paid on how to update the population by newly-generated solutions so that a set of well-distributed high-quality solutions are preserved. With this aim, many selection criteria emerge, such as non-dominated sorting~\cite{goldberg1989genetic}, crowding distance~\cite{deb-tec02-nsgaii}, scalarizing functions~\cite{zhang2007moea} and quality indicators~\cite{zitzler2004indicator}. 

A prominent feature in population update of MOEAs is its deterministic manner.
That is, the next-generation population is usually formed by selecting the first population-size ranked solutions out of the collections of the current population and newly-generated solutions. This practice may be based on the commonly-held belief that a population formed by the best solutions found so far has a higher chance to generate even better solutions.

In this paper, we analytically show that introducing randomness into the population update procedure may be beneficial for the search. Specifically, we consider two well-established MOEAs, SMS-EMOA and NSGA-II, which adopt the $(\mu+1)$ and $(\mu+\mu)$ elitist population update mode, respectively. That is,  SMS-EMOA selects the $\mu$ best solutions from the collections of the current population and the newly generated offspring solution to form the next population; NSGA-II selects the $\mu$ best solutions from the collections of the current population and the $\mu$ newly-generated offspring solutions. Note that selecting the best solutions means removing the worst solution(s), i.e., SMS-EMOA removes the only one worst solution, while NSGA-II removes the $\mu$ worst solutions.

We consider a simple stochastic population update method, which first randomly selects a subset from the collections of the current population and the offspring solutions(s), and then removes the worst solution(s) from the subset. That is, the remaining solutions in the selected subset and those unselected solutions form the next population.
We theoretically show that this simple modification enables SMS-EMOA and NSGA-II to work significantly better on \ojzj\  and \rrmo, two bi-objective optimization problems commonly used in theoretical analyses of MOEAs~\cite{doerr2021ojzj,doerr2023ojzj,doerr2023lower,doerr2023crossover,dang2023crossover}. 
Specifically, we analyze the expected number of generations of SMS-EMOA and NSGA-II for finding the Pareto front, under the original deterministic population update mechanism and under the stochastic population update mechanism, on \ojzj\ and \rrmo. The \nobreakdash{results} are summarized in Table~\ref{tab:summary}. As can be seen in the table, the stochastic population update can bring an exponential acceleration.
For example, for SMS-EMOA solving OneJumpZeroJump, when $k=\Omega(n)\wedge k=n/2-\Omega(n)$ and $2(n-2k+4)\le \mu=poly(n)$, using the stochastic population update can bring an acceleration of $\Omega(2^{k/2}/(\sqrt{k}\mu^2))$, i.e., exponential acceleration, where $n$ denotes the problem size, $k$ ($2\le k<n/2$) denotes the parameter of \ojzj, $\mu$ denotes the population size, and $poly(n)$ denotes any polynomial of $n$. 
Intuitively, the reason for this occurrence is that by introducing randomness into the population update procedure, the evolutionary search has a chance to go along inferior regions which are close to Pareto optimal regions, thereby making the search easier, compared to bigger jumps needed to reach the optimal regions in the original deterministic, greedy procedure. Experiments are also conducted to verify our theoretical results. In addition, as the stochastic method only selects part of the parent and offspring solutions for comparison in population update, and the comparison (e.g., using non-dominated sorting, hypervolume indicator, or crowding distance) may cost non-negligible computational budget, the running time of one generation of each algorithm can also be reduced, which is further validated by the experiments.

\begin{table}[t]
	\centering
	\caption{The expected number of  generations of SMS-EMOA and NSGA-II for solving the \ojzj\ and \rrmo\ problems when the original deterministic or the stochastic population update procedure is used, where $n$ denotes the problem size, $k$ ($2\le k<n/2$) denotes the parameter of \ojzj, and $\mu$ denotes the population size. Note that the equations in the square brackets denote the conditions of the theorems, and $poly(n)$ denotes any polynomial of $n$.  }
	\label{tab:summary}
	\begin{tabular}{ll|l|l}
		\toprule
		& & \ojzj
		& \RRMO
		\\ \midrule
		\multirow{6}{*}{\makecell[l]{SMS-\\EMOA}} 
		& \multirow{4}{*}{Deterministic}
		& $O(\mu n^k)$ 
		& $O(\mu n^{n/5-2})$ 
		\\
		& & [Thm~\ref{thm:sms-upper-ojzj}, $\mu\ge n-2k+3$] 
		& [Thm~\ref{thm:sms-upper-rrmo}, $\mu\ge 2n/5+1$] 
		\\ \cmidrule{3-4}
		& & $\Omega(n^k)$ 
		& $\Omega(n^{n/5-1})$ 
		\\
		& & [Thm~\ref{thm:sms-lower-ojzj}, $n-2k=\Omega(n)\wedge \mu=poly(n)$] 
		& [Thm~\ref{thm:sms-lower-rrmo}, $\mu=poly(n)$]    
		\\ \cmidrule{2-4}
		&  \multirow{2}{*}{Stochastic} 
		& $O(\sqrt{k} \mu^2 n^k  /2^{k/2})$ 
		&  $O(\mu^2 n^{n/5+1/2}/2^{n/10})$ 
		\\
		& & [Thm~\ref{thm:sms-sto-ojzj}, $\mu\ge 2(n-2k+4)$] 
		& [Thm~\ref{thm:sms-sto-rrmo}, $\mu\ge 2(2n/5+2)$]
		\\ \midrule
		\multirow{4}{*}{NSGA-II} &  \multirow{2}{*}{Deterministic}  
		& $\Omega(n^k/\mu)$
		& $\Omega(n^{n/5-1}/\mu) $ 
		\\
		& & [Thm~\ref{thm:nsga-lower-ojzj}, $n-2k=\Omega(n)\wedge \mu=poly(n)$]
		& [Thm~\ref{thm:nsga-lower-rrmo}, $\mu=poly(n)$]
		\\ \cmidrule{2-4}
		& \multirow{2}{*}{Stochastic}   
		& $O(\sqrt{k}(n/2)^k)$
		& $O(\sqrt{n}(20e^2)^{n/5})$ 
		\\
		& & [Thm~\ref{thm:nsga-sto-ojzj}, $\mu\ge 8(n-2k+3)$]
		& [Thm~\ref{thm:nsga-sto-rrmo}, $\mu\ge 8(2n/5+1)$] \\ \bottomrule
	\end{tabular} 
\end{table}

Over the last decade, there has been an increasing interest for the evolutionary theory community to study MOEAs. Primitive theoretical work mainly focuses on analyzing the expected running time of  GSEMO~\cite{Giel03}/SEMO~\cite{LaumannsTEC04}, a simple MOEA which employs the bit-wise/one-bit mutation operator to generate an offspring solution in each iteration and keeps the non-dominated solutions generated-so-far in the population. GSEMO and SEMO have been analytically employed to solve a variety of multi-objective synthetic and combinatorial optimization problems~\cite{Giel03,laumanns-nc04-knapsack,LaumannsTEC04,Neumann07,Horoba09,Giel10,Neumann10,doerr2013lower,Qian13,bian2018tools}. In addition, the expected running time of SEMO in the presence of noise has been analyzed~\cite{dinot2023runtime}.
Moreover, based on GSEMO and SEMO, the effectiveness of some components and methods in evolutionary search, e.g., greedy parent selection~\cite{LaumannsTEC04}, diversity-based parent selection~\cite{plateaus10,osuna2020diversity}, fairness-based parent selection~\cite{LaumannsTEC04,friedrich2011illustration}, fast mutation and stagnation detection~\cite{doerr2021ojzj}, crossover~\cite{Qian13}, and selection hyper-heuristics~\cite{qian-ppsn16-hyper}, has also been studied. GSEMO has also been shown able to achieve good approximation guarantees for submodular optimization, e.g.,~\cite{friedrich2015maximizing,qian.nips15,qian2017subset,qian2019maximizing,roostapour2022pareto}.

Recently, researchers have started attempts to analyze practical MOEAs. 
The expected running time of $(\mu+1)$ SIBEA, i.e., a simple MOEA using the hypervolume indicator to update the population, was analyzed on several synthetic problems~\cite{brockhoff2008analyzing,nguyen2015sibea,doerr2016runtime}, which benefits the theoretical understanding of indicator-based MOEAs. 
Later, people have started to consider well-established algorithms in the evolutionary multi-objective optimization area. Huang \textit{et al.}~\cite{huang2021runtime} considered MOEA/D, and examined the effectiveness of different decomposition methods by comparing the running time for solving many-objective synthetic problems. Zheng and Doerr~\cite{zheng2021first} analyzed the expected running time of NSGA-II for the first time, by considering the bi-objective OneMinMax and LeadingOnesTrailingZeroes problems.
Later on, Zheng and Doerr~\cite{zheng2022current} considered a modified crowding distance method, which updates the crowding distance of solutions once the solution with the smallest crowding distance is removed, and proved that the modified method can approximate the Pareto front better than the original crowding distance method in NSGA-II. Bian and Qian~\cite{bian2022better} proposed a new parent selection method, stochastic tournament selection (i.e., $k$ tournament selection where $k$ is uniformly sampled at random), to replace the binary tournament selection of NSGA-II, and proved that the method can decrease the expected running time asymptotically. The effectiveness of the fast mutation~\cite{doerr2023ojzj} and crossover~\cite{doerr2023crossover,dang2023crossover} operators, using an archive~\cite{bian2024archive}, and encouraging the diversity in the solution space~\cite{ren2024multimodel} has also been analyzed for NSGA-II. Other results include the analysis of NSGA-II solving diverse problems such as the bi-objective multimodal problem \ojzj~\cite{doerr2023ojzj}, the many-objective problem $m$OneMinMax~\cite{zheng2023manyobj}, the bi-objective minimum spanning tree problem~\cite{cerf2023first}, and the noisy LeadingOnesTrailingZeroes problem~\cite{dang2023analysing}, as well as an investigation into the lower bounds of NSGA-II solving OneMinMax and \ojzj~\cite{doerr2023lower}.
 Furthermore, Wietheger and Doerr~\cite{wietheger23nsgaiii} proved that NSGA-III~\cite{deb2014nsgaiii} can be better than NSGA-II when solving the tri-objective problem $3$OneMinMax. Very recently, Lu \textit{et al.}~\cite{lu2024imoea} analyzed interactive MOEAs (iMOEAs) and identified situations where iMOEAs may work or fail; Ren \textit{et al.}~\cite{Ren2024spea2} gave a running time analysis of SPEA2 for the first time; Wietheger and Doerr~\cite{wietheger2024near} derived near-tight running time bounds for a series of MOEAs solving some common many-objective benchmark problems.
 
Our running time analysis about SMS-EMOA contributes to the theoretical understanding of another major type of MOEAs, i.e., combining non-dominated sorting and quality indicators to update the population, for the first time. More importantly, we prove that introducing randomness that essentially gives a chance for inferior solutions to survive can benefit the search of MOEAs significantly in some cases, which contrasts with that existing practical MOEAs are usually built upon - being in favor of better solutions in the population update procedure. Thus, this result may inspire the design of new MOEAs. Similar observation also exists in some parallel empirical  work~\cite{li2023nonelitist,li2023moeas} in the evolutionary multi-objective optimization area. In~\cite{li2023moeas}, well-established MOEAs (e.g., NSGA-II and SMS-EMOA) have been found to stagnate in a different area at a time, indicating that always preserving the best solutions can make the search easy to get stuck in a ``random'' local optimum. In \cite{li2023nonelitist}, a simple non-elitist MOEA, which is of the stochastic population update nature (i.e., worse solutions can survive in the evolutionary process), has been found to outperform NSGA-II on popular practical problems like multi-objective knapsack~\cite{zitzler1999multiobjective} and multi-objective NK-landscape~\cite{aguirre2004insights}.  

In this paper, we significantly extend our preliminary work~\cite{bian23stochastic} which analyzed SMS-EMOA on the  \ojzj\ problem. In this paper, we add another popular algorithm NSGA-II, which has a rather different population update mechanism (in terms of both selection criteria and evolutionary mode). Moreover, we add another multi-objective optimization problem, \rrmo, which is of different features from \ojzj. 
It is worth noting that very recently, Zheng and Doerr~\cite{zheng2024sms} have extended SMS-EMOA to solving a many-objective problem, $m$\ojzj, showing that the stochastic population update proposed in this paper can bring an acceleration of $\Omega(2^{k}/(2n/m-2k+3)^{m/2})$ as well, where $n$ is the problem size, $m$ is the number of objectives, and $k$ ($k\le n/m$) is the parameter of $m$\ojzj. Particularly, when $m$ is not too large (e.g., a constant), and $k$ is large (e.g., $\Theta(n)$), the acceleration is exponential. This positive result in many-objective optimization further confirms our finding that introducing randomness into the population update procedure can be beneficial for the search of MOEAs. However, we also note that when $m$ is large (e.g., $m \geq k$), the acceleration brought by the stochastic population update can vanish, which is because the population size of SMS-EMOA for solving $m$OneJumpZeroJump is required to grow exponentially with $m$, leading to a very small probability of selecting a specific solution required by SMS-EMOA using the stochastic population update.

The rest of this paper is organized as follows. Section~2 introduces some preliminaries. The running time of SMS-EMOA and NSGA-II using the deterministic and stochastic population update is analyzed in Sections~3 and~4, respectively. Section~5 presents the experimental results. Finally, Section~6 concludes the paper.

\section{Preliminaries}\label{sec-preliminary}

In this section, we first introduce multi-objective optimization. Then, we introduce the analyzed algorithms, SMS-EMOA and NSGA-II, and the stochastic  population update method. After that, we present the \ojzj\ and \rrmo\ problems studied in this paper. Finally, we give two lemmas that will be frequently used in the proofs.

\subsection{Multi-objective Optimization}

Multi-objective optimization aims to optimize two or more objective functions simultaneously, as shown in Definition~\ref{def_MO}. Note that in this paper, we consider maximization (minimization can be defined similarly), and pseudo-Boolean functions, i.e., the solution space $\mathcal{X}=\{0,1\}^n$. The objectives are usually conflicting,  thus there is no canonical complete order in the solution space $\mathcal{X}$, and we use the \emph{domination} relationship in Definition~\ref{def_Domination} to compare solutions. 
A solution is \emph{Pareto optimal} if there is no other solution in $\mathcal{X}$ that dominates it, and the set of objective vectors of all the Pareto optimal solutions constitutes the \emph{Pareto front}. The goal of multi-objective optimization is to find the Pareto front or its good approximation.

\begin{definition}[Multi-objective Optimization]\label{def_MO}
	Given a feasible solution space $\mathcal{X}$ and objective functions $f_1,f_2,\ldots, f_m$, multi-objective optimization can be formulated as\vspace{-0.5em}
    \[
		\max_{\bmx\in
			\mathcal{X}}\bmf(\bmx)=\max_{\bmx \in
			\mathcal{X}} \big(f_1(\bmx),f_2(\bmx),...,f_m(\bmx)\big).
   \]
\end{definition}
\begin{definition}[Domination]\label{def_Domination}
	Let $\bm f = (f_1,f_2,\ldots, f_m):\mathcal{X} \rightarrow \mathbb{R}^m$ be the objective vector. For two solutions $\bmx$ and $\bmy\in \mathcal{X}$:\vspace{-1em}
	\begin{itemize}
		\item $\bmx$ \emph{weakly dominates} $\bmy$  (denoted as $\bmx \succeq \bmy$) if for any $1 \leq i \leq m, f_i(\bmx) \geq f_i(\bmy)$;
		\item $\bmx$ \emph{dominates} $\bmy$ (denoted as $\bmx\succ \bmy$) if $\bm{x} \succeq \bmy$ and $f_i(\bmx) > f_i(\bmy)$ for some $i$;
		\item  $\bmx$ and $\bmy$ are \emph{incomparable} if neither $\bmx\succeq \bmy$ nor $\bmy\succeq \bmx$.
	\end{itemize}
\end{definition}

Note that the notions of ``weakly dominate" and ``dominate" are also called ``dominate" and ``strongly dominate" in some works~\cite{cerf2023first,wietheger23nsgaiii}.

\subsection{SMS-EMOA and NSGA-II Algorithms}\label{sec:sms-emoa}

The SMS-EMOA algorithm~\cite{beume2007sms}, as presented in Algorithm~\ref{alg:sms-emoa}, is a popular MOEA, which  employs non-dominated sorting and hypervolume indicator to 
evaluate the quality of a solution and update the population.
SMS-EMOA starts from an initial population of $\mu$ solutions (line~1). In each generation, it  randomly selects a solution from the current population (line~3), and applies bit-wise mutation to generate an offspring solution (line~4).  
Then, the worst solution in the union of the current population $P$ and the newly generated solution is removed (line~5), by using the \smspopdate\ subroutine as presented in Algorithm~\ref{alg:smspopdate}. The subroutine first partitions a set $Q$ of solutions (where $Q=P\cup \{\bmx'\}$ in Algorithm~\ref{alg:sms-emoa})  into non-dominated sets $R_1,R_2,\ldots,R_v$ (line~1), where $R_1$ contains all the non-dominated solutions in $Q$, and $R_i$ ($i\ge 2$) contains all the non-dominated solutions in $Q \setminus \cup_{j=1}^{i-1} R_j$.  
Then, one solution $\bmz\in R_v$ that minimizes
\[
\Delta_{\bmr}(\bmx, R_v):=HV_{\bmr}(R_v)-HV_{\bmr}(R_v\setminus \{\bmx\})
\]
is removed (lines~2--3), where 
$
	HV_{\bmr}(R_v)=\Lambda\big(\cup_{\bmx\in R_v}\{\bmf'\in \mathbb{R}^m \mid  
	\forall 1\le i\le m: 
	r_i\le f'_i\le f_i(\bmx)\}\big)
$
denotes the hypervolume of $R_v$ with respect to a reference point $\bmr\in \mathbb{R}^m$ (satisfying $\forall 1\le i\le m, r_i\le \min_{\bmx\in \mathcal{X}}f_i(\bmx)$), and $\Lambda$ denotes the Lebesgue measure. The hypervolume of a solution set  measures the volume of the objective space between the reference point and the objective vectors of the solution  set, and a larger hypervolume value implies a better approximation ability with regards to both convergence and diversity.
This implies that the solution with the least value of $\Delta$ in the last non-dominated set $R_v$ is regarded as the worst solution and thus removed. 

\begin{algorithm}[t!]
	\caption{SMS-EMOA~\cite{beume2007sms}}
	\label{alg:sms-emoa}
        \vspace{0.5em}
	\textbf{Input}: objective functions $f_1,f_2\ldots,f_m$, population size $\mu$ \\
	\textbf{Output}:  $\mu$ solutions from $\{0,1\}^n$
	\begin{algorithmic}[1] 
		\STATE $P\leftarrow \mu$ solutions uniformly and randomly selected from $\{0, 1\}^{n}$ with replacement;
		\WHILE{criterion is not met}
		\STATE select a solution $\bmx$ from $P$ uniformly at random;
		\STATE generate $\bmx'$ by flipping each bit of $\bmx$ independently with probability $1/n$;
		\STATE $P\leftarrow$ \smspopdate\ $(P\cup \{\bmx'\})$
		\ENDWHILE
		\RETURN $P$
        \vspace{0.5em}
	\end{algorithmic}
\end{algorithm}
\begin{algorithm}[t]
	\caption{\smspopdate\ ($Q$)~}
	\label{alg:smspopdate}
        \vspace{0.5em}
	\textbf{Input}: a set $Q$ of solutions, and a reference point $\bmr\in \mathbb{R}^m$\\
	\textbf{Output}:  $|Q|-1$ solutions from $Q$
	\begin{algorithmic}[1] 
		\STATE partition $Q$ into non-dominated sets $R_1,R_2,\ldots,R_v$;
		\STATE let $\bmz=\arg\min_{\bmx\in R_v}\Delta_{\bmr}(\bmx,R_v)$;
        \RETURN $Q\setminus \{\bmz\}$
        \vspace{0.5em}
	\end{algorithmic}
\end{algorithm}

The NSGA-II algorithm~\cite{deb-tec02-nsgaii}, as presented in Algorithm~\ref{alg:nsgaii}, is probably the most popular MOEA, which incorporates two substantial features, i.e., non-dominated sorting and crowding distance. NSGA-II starts from an initial population of $\mu$ random solutions (line~1). In each generation, it applies bit-wise mutation on each of the solution in the population to generate $\mu$  offspring solutions (lines~3--7).  
Then, the $\mu$ worst solutions in the union of the current population $P$ and the offspring population are removed (line~8), by using the \nsgapopdate\ subroutine as presented in Algorithm~\ref{alg:nsgapopdate}. Similar to Algorithm~\ref{alg:smspopdate}, the subroutine also partitions the set $Q$ of solutions (note that $|Q|=2\mu$) into non-dominated sets $R_1,R_2,\ldots,R_v$ (line~1). Then, the solutions in $R_1, R_2,\ldots, R_v$ are added into the next population (lines~3--5), until the population size exceeds $\mu$. For the critical set $R_i$ whose inclusion makes the population size larger than $\mu$, the crowding distance is computed for each of the contained solutions (line~6).
Crowding distance reflects the level of crowdedness of a solution in the population. For each objective $f_j $, $1\le j\le m$, the solutions in $R_i$ are sorted according to their objective values, and the crowding distance of the solution in the first and the last place is set to infinite.  For the solution in the inner part of the sorted list, its crowding distance is set to the difference of the objective values of its neighbouring solutions, divided by $\max_{\bmx \in R_i}f_j(\bmx)-\min_{\bmx \in R_i}f_j(\bmx)$, i.e., the maximum difference of the objective values. The complete crowding distance of a solution is the sum of the crowding distance with respect to each objective. 
Finally, the solutions in $R_i$ with the largest crowding distance are selected to fill the remaining population slots (line~7). 

\begin{algorithm}[t!]
	\caption{NSGA-II Algorithm~\cite{deb-tec02-nsgaii}}
	\label{alg:nsgaii}
	\vspace{0.5em}
	\textbf{Input}: objective functions $f_1,f_2\ldots,f_m$, population size $\mu$\\
	\textbf{Output}: $\mu$ solutions from $\{0,1\}^n$
	\begin{algorithmic}[1]
		\STATE $P\leftarrow \mu$ solutions uniformly and randomly selected from $\{0,1\}^{n}$ with replacement;
		\WHILE{criterion is not met}
		\STATE $P'=\emptyset$;  
		\FOR{each solution $\bmx$ in $P$}
		\STATE generate $\bmx'$ by flipping each bit of $\bmx$ independently with probability $1/n$;
		\STATE add $\bmx'$ into $P'$
		\ENDFOR
		\STATE $P\leftarrow$ \nsgapopdate\ $(P\cup P')$
		\ENDWHILE
		\RETURN $P$
		\vspace{0.5em}
	\end{algorithmic}
\end{algorithm}
\begin{algorithm}[t!]
	\caption{\nsgapopdate\ ($Q$)~}
	\label{alg:nsgapopdate}
	\vspace{0.5em}
	\textbf{Input}: a set $Q$ of solutions\\
	\textbf{Output}:  $|Q|/2$ solutions from $Q$
	\begin{algorithmic}[1] 
		\STATE partition $Q$ into non-dominated sets $R_1,R_2,\ldots, R_v$;
		\STATE let $O=\emptyset$, $i=1$;
		\WHILE{$|O\cup R_i|<|Q|/2$}
		\STATE $O=O\cup R_i$, $i=i+1$
		\ENDWHILE
		\STATE  assign each solution in $R_i$ with a crowding distance; 
		\STATE add $|Q|/2-|O|$ solutions in $R_i$ with the largest crowding distance into $O$;
		\RETURN $O$
		\vspace{0.5em}
	\end{algorithmic}
\end{algorithm}

\subsection{Stochastic Population Update}\label{sec:sto-env}

Well-established MOEAs usually considered deterministic population update methods. For example, SMS-EMOA always prefers a dominating solution or a solution with a better indicator value, and NSGA-II always prefers a dominating solution or a solution with a larger crowding distance. However, these methods may be too greedy, thus hindering the performance of MOEAs. 
As mentioned above, we introduce randomness into the population update procedure of SMS-EMOA and NSGA-II. 
For SMS-EMOA, \smsnonpopdate\ as presented in Algorithm~\ref{alg:sms-non-popdate} is used to replace the original \smspopdate\ procedure in line~5 of Algorithm~\ref{alg:sms-emoa}; 
for NSGA-II, \nsganonpopdate\ as presented in Algorithm~\ref{alg:nsga-non-popdate} is used to replace the original \nsgapopdate\ procedure in line~8 of Algorithm~\ref{alg:nsgaii}. 
The stochastic methods are similar to the original deterministic population update methods, except that the removed solution (solutions) is (are) selected  from a subset $Q'$ of $Q$, instead of from the entire set $Q$. 
Specifically, in Algorithm~\ref{alg:sms-non-popdate}, $\lfloor |Q|/2\rfloor$ (i.e., $\lfloor (\mu+1)/2\rfloor$) solutions are first selected from $Q$ uniformly and randomly without replacement (line~1), and then one solution in the set of the selected solutions is removed according to non-dominated sorting and hypervolume (lines~2--4).
In Algorithm~\ref{alg:nsga-non-popdate}, $\lfloor 3|Q|/4\rfloor$ (i.e., $\lfloor 3\mu/2\rfloor$) solutions are first selected from $Q$ uniformly and randomly without replacement (line~1), and then $|Q|/2$ (i.e., $\mu$) solutions in the set of the selected solutions are removed according to non-dominated sorting and crowding distance (lines~2--9).
Note that the size of the selected subset is set to $\lfloor |Q|/2\rfloor$ for SMS-EMOA and $\lfloor 3|Q|/4\rfloor$ for NSGA-II in this paper. However, other values can also be used in practical applications.

\begin{algorithm}[t!]
	\caption{ \smsnonpopdate\ ($Q$)~}
	\label{alg:sms-non-popdate}
	\vspace{0.5em}
	\textbf{Input}: a set $Q$ of solutions, and a reference point $\bmr\in \mathbb{R}^m$\\
	\textbf{Output}:   $|Q|-1$ solutions from $Q$
	\begin{algorithmic}[1] 
		\STATE $Q'\leftarrow \lfloor |Q|/2\rfloor$ solutions uniformly and randomly selected from $Q$ without replacement;
		\STATE partition $Q'$ into non-dominated sets $R_1,R_2,\ldots,R_v$;
		\STATE let $\bmz=\arg\min_{\bmx\in R_v}\Delta_{\bmr}(\bmx,R_v)$;
		\RETURN $Q\setminus \{\bmz\}$
		\vspace{0.5em}
	\end{algorithmic}
\end{algorithm}
\begin{algorithm}[t!]
	\caption{\nsganonpopdate\ ($Q$)~}
	\label{alg:nsga-non-popdate}
	\vspace{0.5em}
	\textbf{Input}: a set $Q$ of solutions\\
	\textbf{Output}:  $|Q|/2$ solutions from $Q$
	\begin{algorithmic}[1]
		\STATE $Q'\leftarrow \lfloor 3|Q|/4\rfloor$ solutions uniformly and randomly selected from $Q$ without replacement;
		\STATE partition $Q'$ into non-dominated sets $R_1,R_2,\ldots, R_v$;
		\STATE let $O=\emptyset$, $i=1$;
		\WHILE{$|O\cup R_i|<\lfloor|Q|/4\rfloor$}
		\STATE $O=O\cup R_i$, $i=i+1$
		\ENDWHILE
		\STATE  assign each solution in $R_i$ with a crowding distance; 
		\STATE add $\lfloor|Q|/4\rfloor-|O|$  solutions in $R_i$ with the largest crowding distance into $O$;
		\RETURN $Q\setminus (Q'\setminus O)$
	\end{algorithmic}
\end{algorithm}

According to the procedures of \smsnonpopdate\ and \nsganonpopdate, we can derive Lemma~\ref{lem:stochastic}, which shows that any solution (even the worst solution) in the collections of the current population and the newly generated offspring solution(s) can survive in the population update procedure with probability at least $1/2$ for SMS-EMOA and $1/4$ for NSGA-II, respectively. In Sections~\ref{sec-sms} and~\ref{sec-nsga}, this lemma will be frequently used in the analysis of SMS-EMOA and NSGA-II when using the stochastic population update.

\begin{lemma}\label{lem:stochastic}
Let $P$ denote the current population, $\bmx'$ denote the offspring solution produced by SMS-EMOA in the current generation, and $P'$ denote the offspring population produced by NSGA-II in the current generation. For SMS-EMOA using the stochastic population update in Algorithm~\ref{alg:sms-non-popdate}, any solution in $P\cup \{\bmx'\}$ is maintained in the next population with probability at least $1/2$. For NSGA-II using the stochastic population update in Algorithm~\ref{alg:nsga-non-popdate}, any solution in $P\cup P'$ is maintained in the next population with probability at least $1/4$.
\end{lemma}
\begin{myproofd}
For SMS-EMOA using the stochastic population update in Algorithm~\ref{alg:sms-non-popdate}, any solution in $P\cup \{\bmx'\}$ can be removed only if it is chosen for competition in line~1 of Algorithm~\ref{alg:sms-non-popdate}, whose probability is $\lfloor |Q|/2\rfloor/|Q|=\lfloor (\mu+1)/2\rfloor/(\mu+1)\le 1/2$, where the equality holds by $|Q|=|P\cup \{\bm{x}'\}|=\mu+1$. For NSGA-II using the stochastic population update in Algorithm~\ref{alg:nsga-non-popdate}, any solution in $P\cup P'$ can be removed only if it is chosen for competition in line~1 of Algorithm~\ref{alg:nsga-non-popdate}, whose probability is $\lfloor 3|Q|/4\rfloor/|Q|=\lfloor 3\mu/2\rfloor/(2\mu)\le 3/4$, where the equality holds by $|Q|=|P\cup P'|=2\mu$. Thus, the lemma holds.
\end{myproofd}

\subsection{\ojzj\ and \RRMO\ Problems}
The \ojzj\ problem is a multi-objective counterpart of the Jump problem, a classical single-objective pseudo-Boolean benchmark problem in EAs' theoretical analyses~\cite{doerr-20-book}. The goal of the Jump problem is to maximize the number of 1-bits of a solution, except for a valley around $1^n$ (the solution with all 1-bits) where the number of 1-bits should be minimized. Formally, it aims to find an $n$-bits binary string which maximizes 
\[g(\bmx) = \begin{cases}
	k+|\bmx|_1, & \text{if }|\bmx|_1 \leq n-k\text{ or } \bmx=1^n,\\
	n-|\bmx|_1, & \text{else},
\end{cases}\]
where $k\in [2\dots  n-1]$, and $|\bmx|_1$ denotes the number of $1$-bits in $\bmx$. Note that we use $[a\dots b]$ (where $ a,b\in \mathbb{Z}, a\le b$) to denote the set  $\{a,a+1,\ldots, b\}$ of integers throughout the paper. 

The \ojzj\ problem as presented in Definition~3 is constructed based on the Jump problem, and has been widely used in MOEAs' theoretical analyses~\cite{doerr2021ojzj,doerr2023ojzj,doerr2023lower,doerr2023crossover}. Its first objective is the same as the Jump problem, while the second objective is isomorphic to the first one, with the roles of 1-bits and 0-bits exchanged. 
The left subfigure of Figure~\ref{fig:ojzj} illustrates the values of $f_1$ and $f_2$ with respect to the number of 1-bits of a solution.
\begin{definition}[\cite{doerr2021ojzj}]\label{def:ojzj}
	The \ojzj \ problem is to find $n$ bits binary strings which maximize
	\[ f_1(\bmx) = \begin{cases}
		k+|\bmx|_1, & \text{if }|\bmx|_1 \leq n-k\text{ or } \bmx=1^n,\\
		n-|\bmx|_1, & \text{else},
	\end{cases}\]
	\[f_2(\bmx) = \begin{cases}
		k+|\bmx|_0, & \text{if }|\bmx|_0 \leq n-k\text{ or } \bmx=0^n,\\
		n-|\bmx|_0, & \text{else},
	\end{cases}\]
	where $k\in \mathbb{Z} \wedge 2\le k<n/2$, and $|\bmx|_1$ and $|\bmx|_0$ denote the number of 1-bits and 0-bits in $\bmx$, respectively.
\end{definition}

\begin{figure}\centering
        \begin{minipage}[c]{0.4\linewidth}\centering
		\includegraphics[width=1\linewidth]{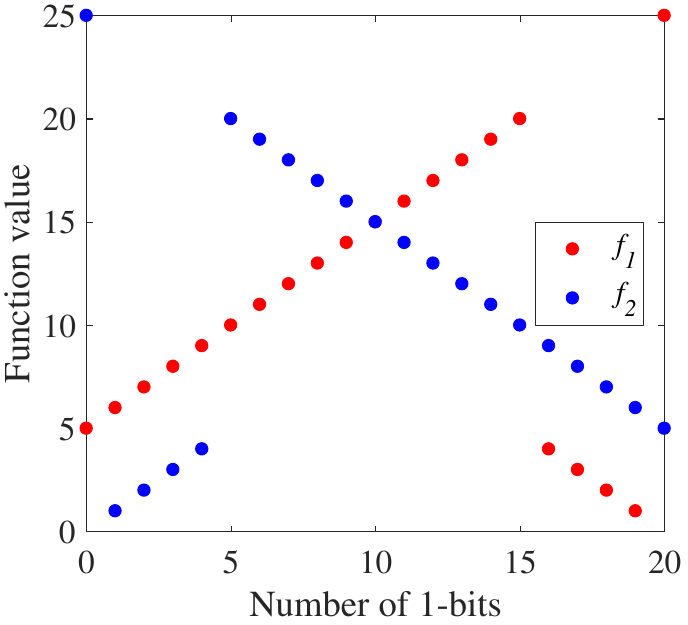}
	\end{minipage}
        \hspace{2em}
        \begin{minipage}[c]{0.4\linewidth}\centering
		\includegraphics[width=1\linewidth]{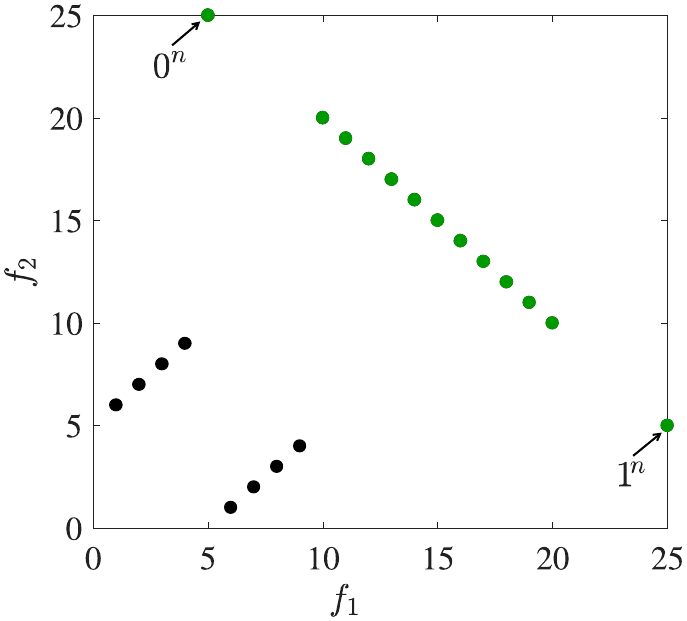}
	\end{minipage}
	\caption{Illustration of the \ojzj \  problem when $n=20$ and $k=5$. The left subfigure: the function values vs. the number of 1-bits of a solution; the right subfigure: 
 the second function value vs. the first function value, where the set of green points are the Pareto front.}\label{fig:ojzj}
\end{figure}

According to Theorem~7 of \cite{doerr2021ojzj}, the Pareto set of the \ojzj \ problem is 
\begin{equation}
S^*=\{\bmx \in \{0,1\}^n \mid|\bmx|_1 \in [k\dots n-k]\cup\{0, n\}\},
\end{equation}
and the Pareto front  is 
\begin{equation}
\bmf(S^*) = \{(a, n+2k-a)\mid a\in[2k\dots n]\cup\{k, n+k\}\},
\end{equation}
whose size is $n-2k+3$. The right subfigure of Figure~\ref{fig:ojzj} illustrates the objective vectors and the Pareto front. We use 
\begin{equation}\label{eq:SI}
	S_{I}^*=\{\bmx \in S^*\mid|\bmx|_{1}\in [k\dots n-k]\}
\end{equation} 
 and 
\begin{equation}\label{eq:FI}
	F_{I}^*=\bmf(S_{I}^*)=\{(a, 2k+n-a)\mid a\in[2k\dots n]\}
\end{equation} 
to denote the inner part of the Pareto set and Pareto front, respectively.

The \rrmo\ problem~\cite{dang2023crossover} as presented in Definition~\ref{def:rrmo} only takes positive function values on two regions $G$ and $H$: $G$ consists of solutions with at most $3n/5$ 1-bits, and $H$ consists of solutions with $4n/5$ 1-bits where the $4n/5$ 1-bits are consecutive. In regions $G$ and $H$, a solution with more 1-bits is preferred, and for two solutions with the same number of 1-bits, the solution with more trailing 0-bits or leading 0-bits is preferred. Note that we use $\textsc{TZ}(\bmx)=\sum_{i=1}^n\prod_{j=i}^n(1-x_j)$ and $\textsc{LZ}(\bmx)=\sum_{i=1}^n\prod_{j=1}^i(1-x_j)$ to denote the number of trailing 0-bits and leading 0-bits of a solution $\bmx$, respectively, where $x_j$ denotes the $j$-th bit of $\bmx\in \{0,1\}^n$.
\begin{definition}[\cite{dang2023crossover}]\label{def:rrmo}
The \rrmo \ problem is to find $n$ bits binary strings which maximize
\[f_1(\bmx) = \begin{cases}
	n|\bmx|_1 + \textsc{TZ}(\bmx), & \text{if }\bmx\in G\cup H,\\
		0, & \text{else},
	\end{cases}
 \]
\[
f_2(\bmx) = \begin{cases}
		n|\bmx|_1 + \textsc{LZ}(\bmx), & \text{if }\bmx\in G\cup H,\\
		0, & \text{else},
	\end{cases}
 \]
where $n/5\in \mathbb{Z}^+$,  $G= \{\bmx \mid |\bmx|_1 \le 3n/5\}$ and $H= \{\bmx \mid |\bmx|_1 = 4n/5 \wedge \lz(\bmx) + \tz(\bmx) = n/5\}$.
\end{definition}
From the definition, we can see that the Pareto set of the \rrmo \ problem is exactly $H$, 
and the Pareto front  is 
\begin{equation}
\bmf(H) = \Big\{\Big(\frac{4n^2}{5}+a, \frac{4n^2}{5}+\frac{n}{5}-a\Big)\mid a\in \Big[0\dots \frac{n}{5}\Big]\Big\},
\end{equation}
whose size is $n/5+1$. Figure~\ref{fig:rrmo-pareto} illustrates the objective vectors and the Pareto front. We use 
\[
G'=\{\bmx \mid |\bmx|_1 = 3n/5 \wedge \lz(\bmx) + \tz(\bmx) = 2n/5\}
\]
to denote the non-dominated set of solutions in $G$.

The \ojzj\ problem characterizes a class of problems where some adjacent Pareto optimal solutions in the objective space locate far away in the decision space, and the \rrmo\ problem characterizes a class of problems where a large gap exists between Pareto optimal solutions and sub-optimal solutions (i.e., the solutions that can only be dominated by Pareto optimal solutions) in the decision space.
Thus, studying these two problems can provide a general insight on the ability of MOEAs of going across inferior regions around Pareto optimal solutions.

\begin{figure}\centering
	\begin{minipage}[c]{0.4\linewidth}\centering		\includegraphics[width=1\linewidth]{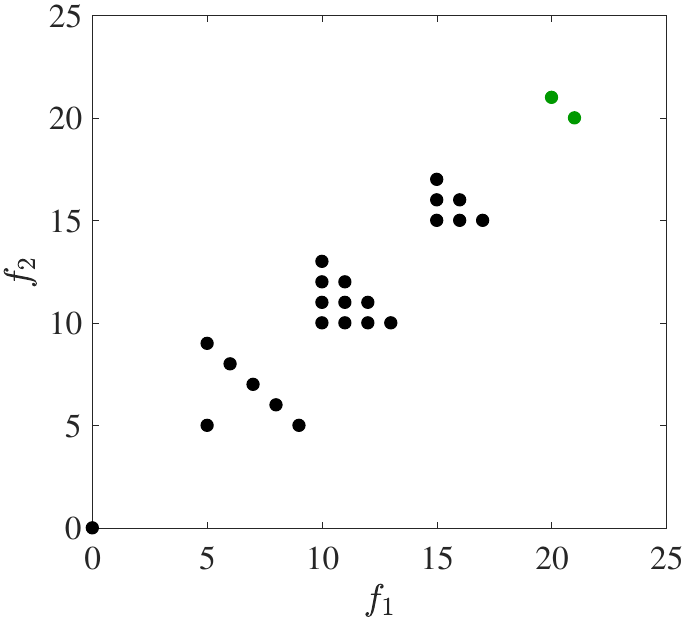}
	\end{minipage}
	\caption{The objective vectors of the \rrmo \  problem when $n=5$, where the set of green points are the Pareto front.}\label{fig:rrmo-pareto}
\end{figure}

\subsection{Some Lemmas Used in the Proofs}

Finally, we give two lemmas that will be frequently used in the proofs.

\begin{lemma}\label{lem:binom}
For $b \geq c \geq b-a \geq 0$, it holds that $\binom{b}{a} \geq \binom{c}{b-a}$.
\end{lemma}
\begin{myproofd}
By the symmetry of the binomial distribution, we have $\binom{b}{a}=\binom{b}{b-a}$. Furthermore, we have $\binom{b}{b-a}=\frac{b!}{(b-a)!a!}=\frac{1}{(b-a)!}\cdot \prod^{b-a}_{i=1}(a+i)\geq \frac{1}{(b-a)!}\cdot \prod^{b-a}_{i=1}(c-b+a+i)=\frac{c!}{(b-a)!(c-b+a)!}=\binom{c}{b-a}$, where the inequality holds by $b \geq c \geq b-a$. Thus, $\binom{b}{a} \geq \binom{c}{b-a}$, implying that the lemma holds.
\end{myproofd}

\begin{lemma}[\cite{robbins1955remark}]\label{lem:stirling}
For a positive integer $n$, we have Stirling's formula as $\sqrt{2\pi n}(n/e)^n \leq n! \leq  e^{1/12}\sqrt{2\pi n} (n/e)^n$.
\end{lemma}

\section{Running Time Analysis of SMS-EMOA}\label{sec-sms}

In this section, we analyze the expected running time of SMS-EMOA in Algorithm~\ref{alg:sms-emoa} using the deterministic population update in Algorithm~\ref{alg:smspopdate} and the stochastic population update in Algorithm~\ref{alg:sms-non-popdate} for solving the \ojzj\ and \rrmo\ problems, which shows that the stochastic population update can bring exponential acceleration. 
Note that the running time of EAs is often measured by the number of fitness evaluations, which can be the most time-consuming step in the evolutionary process. As SMS-EMOA generates only one offspring solution in each generation, its running time is just equal to the number of generations.
Since each objective value of  \ojzj\ and \rrmo\ is not smaller than zero, we set the reference point $\bmr=(-1,-1)$.

\subsection{Analysis of SMS-EMOA Solving \ojzj}

First we analyze SMS-EMOA solving the \ojzj\ problem. 
We prove in Theorems~\ref{thm:sms-upper-ojzj} and~\ref{thm:sms-lower-ojzj} that the upper and lower bounds on the expected number of generations of SMS-EMOA using the original population update in Algorithm~\ref{alg:smspopdate} solving \ojzj\ are $O(\mu n^k)$ and $\Omega (n^k)$, respectively. 
Next, we prove in Theorem~\ref{thm:sms-sto-ojzj} that by using the stochastic population update in Algorithm~\ref{alg:sms-non-popdate} to replace the original population update procedure in Algorithm~\ref{alg:smspopdate}, SMS-EMOA with a population size $\mu\ge 2(n-2k+4)$ can solve \ojzj\ in $O(\sqrt{k}\mu^2 n^k/2^{k/2})$ expected number of generations, implying a significant  acceleration for large $k$. Note that the analysis of SMS-EMOA on OneJumpZeroJump is new, while the analyses in other sections will use the proof idea here or build on existing analyses from~\cite{dang2023crossover}.

The main proof idea of Theorem~1 is to divide the optimization procedure into two phases, where the first phase aims at finding the inner part $F_{I}^*$ (in Eq.~\eqref{eq:FI}) of the Pareto front, and the second phase aims at finding the remaining two extreme vectors on the Pareto front, i.e., $F^* \setminus F_{I}^*=\{(k,n+k),(n+k,k)\}$, corresponding to the two Pareto optimal solutions $0^n$ and $1^n$.
\begin{theorem}\label{thm:sms-upper-ojzj}
	For SMS-EMOA solving \ojzj, if using a population size $\mu$ such that $\mu\ge n-2k+3$, then the expected number of generations for finding the Pareto front is $O(\mu n^k)$.
\end{theorem}
Before proving Theorem~\ref{thm:sms-upper-ojzj}, we first present Lemma~\ref{lem:sms-upper-ojzj}, which shows that once an objective vector $\bmf^*$ on the Pareto front is found, it will always be maintained, i.e., there will always exist a solution in the population whose objective vector is $\bmf^*$.
\begin{lemma}\label{lem:sms-upper-ojzj}
	For SMS-EMOA solving \ojzj, if using a population size $\mu$ such that $\mu\ge n-2k+3$, then an objective vector $\bmf^*$ on the Pareto front will always be maintained once it has been found.
\end{lemma}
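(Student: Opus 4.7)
The plan is to show that a single generation of SMS-EMOA cannot delete the last solution whose image is $\bmf^*$, whence the lemma follows by induction on iterations. Write $Q = P \cup \{\bmx'\}$ and let $\bmz$ be the solution removed by \popdate. The only dangerous situation is when $\bmz$ is the unique element of $Q$ mapped to $\bmf^*$, so I may assume that some $\bm{s} \in P$ is the sole copy of $\bmf^*$ in $Q$ (in particular $\bmf(\bmx') \ne \bmf^*$), and reduce the claim to proving $\bmz \ne \bm{s}$. Because $\bmf^* \in F^*$ is Pareto optimal, no element of $Q$ dominates $\bm{s}$, so $\bm{s}$ lies in the first non-dominated layer $R_1$ of $Q$. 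If the decomposition has $v \ge 2$ layers, then $\bmz \in R_v$ is disjoint from $R_1$ and $\bmz \ne \bm{s}$ is immediate.

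For the remaining case $v = 1$, all $\mu + 1$ solutions of $Q$ form a single antichain in objective space. I will close it by forcing $Q$ to contain two solutions sharing an objective vector and then invoking a hypervolume comparison. Once a duplicated pair exists, each member of the pair has $\Delta_{\bmr}(\cdot, R_1) = 0$, because removing one of the twins leaves the region covered by the other unchanged, whereas $\bm{s}$ is the sole element of $Q$ with image $\bmf^*$ and so has strictly positive hypervolume contribution: the reference point $\bmr = (0,0)$ is strictly dominated by every OneJumpZeroJump image, and in any two-dimensional antichain the slab uniquely swept out by a non-duplicated point has strictly positive area. Hence the minimizer $\bmz$ must be a duplicate, and $\bmz \ne \bm{s}$.

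To force duplicates, I would establish the structural fact that the longest antichain in $\bmf(\{0,1\}^n)$ has exactly $|F^*| = n - 2k + 3$ elements; then $|Q| = \mu + 1 \ge n - 2k + 4$ and pigeonhole produce the required coincidence. I would prove this by partitioning the image into four classes — the inner Pareto images $(k+c, k+n-c)$ for $c \in [k..n-k]$, the two extreme Pareto images $(k, n+k)$ and $(n+k, k)$, and the two dominated branches $(k+a, a)$ and $(b, k+b)$ for $a, b \in [1..k-1]$ — and checking four easy dominations: every inner Pareto image strictly dominates every dominated-branch image, $(n+k, k)$ strictly dominates the entire first branch, $(k, n+k)$ strictly dominates the entire second, and within each branch the images form a chain. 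Combining these, any antichain containing a non-Pareto image has size at most two, so $F^*$ is the unique largest antichain. This antichain bound is the main obstacle, since it is the only step that uses the specific geometry of OneJumpZeroJump; the other ingredients (disjointness of non-dominated layers and the zero-versus-positive hypervolume dichotomy under a dominated reference point) are routine.
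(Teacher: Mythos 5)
Your proof is correct and follows essentially the same route as the paper's: place the unique representative of $\bmf^*$ in $R_1$ with strictly positive hypervolume contribution, and argue that the removed solution must have zero contribution because $|Q|=\mu+1$ exceeds the number of mutually incomparable distinct objective vectors (at most $n-2k+3$) --- the paper establishes this last count by a two-case analysis of which solutions can have $\Delta>0$, which is your antichain-plus-pigeonhole argument in different packaging. One cosmetic slip: the sole copy of $\bmf^*$ in $Q$ may be the offspring $\bmx'$ rather than an element of $P$ (e.g., when $\bmf^*$ is first found), but since your argument never actually uses $\bm{s}\in P$, nothing breaks.
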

\begin{myproofd}
   Suppose the objective vector $(a,n+2k-a), a\in[2k\dots n] \cup \{k,n+k\}$, on the Pareto front is obtained by SMS-EMOA, i.e.,  there exists at least one solution in $Q$ (i.e., $P\cup \{\bmx'\}$ in line~5 of  Algorithm~\ref{alg:sms-emoa}) corresponding to the objective vector $(a,n+2k-a)$. 
   Note that only one solution is removed in each generation by Algorithm~\ref{alg:smspopdate}, thus we only need to consider the case that exactly one solution  (denoted as $\bmx^*$) corresponds  to the objective vector $(a,n+2k-a)$. 
   Since $\bmx^*$ cannot be dominated by any other solution, we have $\bmx^* \in R_1$ in the \smspopdate\ procedure. We also have $\Delta_{\bmr}(\bmx^*,R_1)=HV_{\bmr}(R_1)-HV_{\bmr}(R_1\setminus \{\bmx^*\})>0$ because the region 
    \begin{equation}\label{eq:sms-ojzj-lem-1}
    \{\bmf'\in \mathbb{R}^2\mid a-1< f'_1\le a, n+2k-a-1<f'_2\le n+2k-a\}
    \end{equation}
    cannot be covered by any objective vector in $\bmf(\{0,1\}^n)\setminus \{(a,n+2k-a)\}$. Then, we consider two cases.
	
	(1) There exists one solution $\bmx$ in $R_1$ such that $k\le |\bmx|_1 \le n-k$. Then, $R_1$ cannot contain solutions whose number of 1-bits are in $[1\dots k-1]\cup [n-k+1\dots n-1]$,
	because these solutions must be dominated by~$\bmx$. If at least two solutions in $R_1$ have the same objective vector, then they must have a zero $\Delta$-value, because removing one of them will not decrease the hypervolume covered. Thus, for each objective vector $(b,n+2k-b), b\in[2k\dots n] \cup \{k,n+k\}$, at most one solution can have a $\Delta$-value larger than zero, implying that there  exist at most $n-2k+3\le \mu$ solutions  in $R_1$ with $\Delta>0$.
	
	(2) Any solution $\bmx$ in $R_1$ satisfies $|\bmx|_1<k$ or $|\bmx|_1>n-k$. Note that for the solutions with the number of 1-bits in $[1\dots k-1]$, a solution with more 1-bits must dominate a solution with less 1-bits. Meanwhile, two solutions with the same number of 1-bits will have a zero $\Delta$-value, thus  $R_1$ can only contain at most one solution in $\{\bmx\mid |\bmx|_1\in [1\dots k-1]\}$ with $\Delta$-value larger than zero. Similarly, at most one solution in $\{\bmx \mid |\bmx|_1\in [n-k+1\dots n-1]\}$ with  $\Delta$-value larger than zero belongs to $R_1$. For solutions with number of 1-bits in $\{0,n\}$ (note that there may exist reduplicative solutions in $Q$), it is also straightforward to see that at most two of them can have a $\Delta$-value larger than zero. 
	By the problem setting $k<n/2$, we have $\mu\ge n-2k+3\ge 4$,  thus there exist at most $\mu$ solutions  in $R_1$ with $\Delta$-value larger than zero. 
	
	Combining the above two cases, we show that there exist at most $\mu$ solutions in $R_1$ with $\Delta$-value larger than zero, implying that $\bmx^*$ will still be maintained in the next generation.   
\end{myproofd}
\begin{myproof}{Theorem~\ref{thm:sms-upper-ojzj}}
	We divide the optimization procedure  into two phases. The first phase starts after initialization and finishes when all the objective vectors in the inner part $F_{I}^*$ of the Pareto front have been found; the second phase starts after the first phase and finishes when the whole Pareto front is found. We will show that the expected number of generations of the two phases is $O(\mu (n\log n+k^k))$ and $O(\mu n^k)$, respectively, leading to the theorem. 
	
	For the first phase, we consider two cases. 
 
	(1) At least one solution in the inner part $S_{I}^*$  of the Pareto set exists in the initial population. Let 
	\begin{align}
		D_1=\{\bmx\in P \mid \ & (H^+(\bmx)\cap P= \emptyset\wedge H^+(\bmx)\cap S_{I}^* \neq \emptyset) \vee(  H^-(\bmx)\cap P= \emptyset\wedge H^-(\bmx)\cap S_{I}^* \neq \emptyset)\},
	\end{align}
	where $H^+(\bmx):=\{\bmx'\mid |\bmx'|_1=|\bmx|_1+1\}$ and $H^-(\bmx):=\{\bmx'\mid |\bmx'|_1=|\bmx|_1-1\}$ denote the Hamming neighbours of $\bmx$ with one more 1-bit and one less 1-bit, respectively. Intuitively, $D_1$ denotes
	the set of solutions in $P$ whose Hamming neighbour is Pareto optimal but not contained by $P$.
	Then, by selecting a solution $\bmx\in D_1$, and flipping one of the 0-bits or one of the 1-bits, a new objective vector in $F_I^*$ can be obtained. By Lemma~\ref{lem:sms-upper-ojzj}, one solution corresponding to the new objective vector will always be maintained in the population. Then, by repeating the above procedure, the whole set $F_I^*$ can be found. Note that the probability of selecting a specific solution in $P$ is $1/\mu$, and the probability of flipping one of the 0-bits (or 1-bits) is $(n-|\bmx|_1)\cdot (1/n) \cdot (1-1/n)^{n-1}\ge (n-|\bmx|_1)/(en)$ (or  $(|\bmx|_1/n) \cdot (1-1/n)^{n-1}\ge |\bmx|_1/(en)$). Thus, the total expected number of generations for finding $F_I^*$ is at most $\sum^{|\bm{x}|_1}_{i=k+1} (\mu en)/i+ \sum^{n-|\bm{x}|_1}_{i=k+1} (\mu en)/i =O(\mu n \log n)$.
		
	(2) Any solution in the initial population has at most $k-1$ 1-bits or at least $n-k+1$ 1-bits. Without loss of generality, we can assume that one solution $\bmy$ has at most $k-1$ 1-bits. Then, selecting $\bmy$ and flipping $k-|\bmy|_1$ 0-bits can generate a solution in $S_I^*$, whose probability is at least 
    \begin{equation}
    \frac{1}{\mu}\cdot \frac{\binom{n-|\bmy|_1}{k-|\bmy|_1}}{n^{k-|\bmy|_1}}\cdot \Big(1-\frac{1}{n}\Big)^{n-k+|\bmy|_1}\ge \frac{1}{e\mu}\cdot \frac{\binom{n-|\bmy|_1}{k-|\bmy|_1}}{n^{k-|\bmy|_1}}.
    \end{equation}
	Let $g(i)=\binom{n-i}{k-i}/n^{k-i}, 0\le i\le k-1$, then we have 
    \begin{equation}
    \frac{g(i+1)}{g(i)}=\frac{\binom{n-i-1}{k-i-1}}{\binom{n-i}{k-i}}\cdot n=\frac{n(k-i)}{(n-i)}\ge 1,\end{equation}
    i.e., $g(i)$ increases with $i$. This implies that
	\begin{equation}
    \frac{\binom{n-|\bmy|_1}{k-|\bmy|_1}}{n^{k-|\bmy|_1}}= g(|\bmy|_1)\ge g(0)=\frac{\binom{n}{k}}{n^k}\ge \Big(\frac{n}{k}\Big)^k\cdot \frac{1}{n^k}= \frac{1}{k^k}, \end{equation}
   where the first inequality holds by considering the worst case $|\bm{y}|_1=0$. Thus, the expected number of generations for finding a solution in $S_I^*$ is at most $e\mu k^k$. By Lemma~\ref{lem:sms-upper-ojzj},  the generated solution must be included into the population. Thus, combining the analysis for case~(1), we can derive that the total expected number of generations for finding $F_I^*$ is $O(\mu (n\log n+k^k))$. 
	
	For the second phase, we need to find the two extreme solutions $1^n$ and $0^n$. To find $1^n$ (or $0^n$), it is sufficient to select the solution in the population $P$ with $n-k$ 1-bits (or $k$ 1-bits) and flip its $k$ 0-bits (or $k$ 1-bits), whose probability is $(1/\mu)\cdot (1/n^{k})\cdot (1-1/n)^{n-k}\ge 1/(e\mu n^k)$. Thus, the expected number of generations is $O(e\mu n^k)$. 
	
	Combining the analysis of the two phases,  the expected number of generations for finding the whole Pareto front is
	$O(\mu (n\log n+k^k))+O(e\mu n^k)=O(\mu n^k)$, where the equality holds by $k\geq 2$.
\end{myproof}

The proof idea of Theorem~\ref{thm:sms-lower-ojzj}  is that all the solutions in the initial population belong to the inner part  $S_{I}^*$  (in Eq.~\eqref{eq:SI}) of the Pareto set with probability $\Theta(1)$, and then SMS-EMOA requires $\Omega(n^k)$ expected number of generations to find the two extreme Pareto optimal solutions $1^n$ and $0^n$.
\begin{theorem}\label{thm:sms-lower-ojzj}
    For SMS-EMOA solving \ojzj\ with $n-2k=\Omega(n)$, if using a population size $\mu$ such that $\mu=poly(n)$, then the expected number of generations for finding the Pareto front is $\Omega(n^k)$.
\end{theorem}
\begin{myproofd}
	Let  $A$ denote the event that all the solutions in the initial population belong to $S_I^*$, i.e., for any solution $\bmx$ in the initial population, $k\le |\bmx|_1\le n-k$. We first show that event $A$ happens with probability $1-e^{-\Omega(n)}$. For an initial solution $\bmy$, it is generated uniformly at random, i.e., each bit in $\bmy$ can be 1 or 0 with probability $1/2$, respectively. Thus, the expected number of 1-bits in $\bmy$ is exactly $n/2$. By Hoeffding's inequality and the condition $n-2k=\Omega(n)$ of the theorem, we have \begin{equation}
    \Pr\Big(\Big||\bmy|_1-\frac{n}{2}\Big|>\frac{n}{2}-k\Big)<2e^{-2(n/2-k)^2/n}=e^{-\Omega(n)}. 
    \end{equation}
	Then, we can derive that 
	\begin{equation}
	\Pr\Big(\forall \bmx \text{ in the initial population}, \Big||\bmx|_1-\frac{n}{2}\Big|\le \frac{n}{2}-k\Big)\ge \big(1-e^{-\Omega(n)}\big)^\mu \ge 1-\mu \cdot e^{-\Omega(n)}=1-e^{-\Omega(n)},
	\end{equation}
	where the last inequality holds by Bernoulli's inequality, and the equality holds by the condition $\mu = poly(n)$.
    
    Next we show that given event $A$, the expected number of generations for finding the whole Pareto front is at least $n^k$. Starting from the initial population, if a solution $\bmx$ with $1\le |\bmx|_1\le k-1$ or $n-k+1\le |\bmx|_1\le n-1$ is generated in some generation, it will be deleted because it is dominated by all solutions in the current population. Thus, the extreme solution $1^n$ can only be generated by selecting a solution in $S_I^*$ and flipping all the 0-bits, whose probability is at most $1/n^k$. Thus, the expected number of generations for finding $1^n$ is at least $n^k$. 
	
	Combining the above analyses, the expected number of generations for finding the whole Pareto front is at least $(1-e^{-\Omega(n)})\cdot n^k=\Omega(n^k)$. 
\end{myproofd}

Now, we analyze the effectiveness of the stochastic population update. The basic proof idea of Theorem~\ref{thm:sms-sto-ojzj} is similar to that of Theorem~\ref{thm:sms-upper-ojzj}, i.e., dividing the optimization procedure into two phases, which are to find $F_{I}^*$ and $F^* \setminus F_{I}^*=\{(k,n+k),(n+k,k)\}$, respectively. However, the analysis for the second phase is a little more sophisticated here, because dominated solutions can be included into the population when using the stochastic population update, leading to a more complicated behavior of SMS-EMOA. 
\begin{theorem}\label{thm:sms-sto-ojzj}
    For SMS-EMOA solving \ojzj, if using the stochastic population update in Algorithm~\ref{alg:sms-non-popdate}, and a population size $\mu$ such that $\mu\ge 2(n-2k+4)$, then the expected number of generations for finding the Pareto front is $O(\sqrt{k} \mu^2 n^k /2^{k/2})$.
\end{theorem}

Before proving Theorem~\ref{thm:sms-sto-ojzj}, we first note that Lemma~\ref{lem:sms-upper-ojzj} also applies to SMS-EMOA with stochastic population update when the population size $\mu\ge 2(n-2k+4)$. That is, for SMS-EMOA solving \ojzj, if using the stochastic population update in Algorithm~\ref{alg:sms-non-popdate}, and a population size $\mu$ such that $\mu\ge 2(n-2k+4)$, then an objective vector $\bmf^*$ on the Pareto front will always be maintained once it has been found. Suppose that one solution $\bmx^*$ corresponding to $\bmf^*$ exists in $Q=P\cup \{\bmx'\}$. By the proof of Lemma~\ref{lem:sms-upper-ojzj}, there exist at most $n-2k+3$ solutions in $R_1$ with $\Delta$-value larger than zero. Note that the removed solution is chosen from $\lfloor (\mu+1)/2\rfloor\ge n-2k+4$ solutions in $Q$. Thus, $\bmx^*$ will not be removed because it is one of the best $n-2k+3$ solutions.

Then, we present Lemma~\ref{additive-drift}, which is used to derive an upper bound on the expected number of generations of  the second phase. Because the population of SMS-EMOA in the $(t+1)$-th generation only depends on the $t$-th population, its process can be naturally modeled as a Markov chain. Given a Markov chain $\{\xi_t\}^{\infty}_{t=0}$ and $\xi_{\hat{t}}=\mathrm{x}\in \mathcal{X}$, we define its \emph{first hitting time} as $\tau=\min\{t \mid \xi_{\hat{t}+t} \in \mathcal{X}^*,t\geq0\}$, where  $\mathcal{X}$ and  $\mathcal{X}^*$ denote the state space and  target state space of the Markov chain, respectively.
For the analysis in Theorem~\ref{thm:sms-sto-ojzj}, 
$\mathcal{X}$ denotes the set of all the populations after phase~1, and $\mathcal{X}^*$ denotes the set of all the populations which contain the Pareto optimal solution $1^n$ (or $0^n$).
The mathematical expectation of $\tau$, $\expct(\tau \mid \xi_{\hat{t}}=\mathrm{x})=\sum\nolimits^{\infty}_{i=0} i\cdot\Pr(\tau=i \mid \xi_{\hat{t}}=\mathrm{x})$, is called the \emph{expected first hitting time} (EFHT) starting from $\xi_{\hat{t}}=\mathrm{x}$. 
The additive drift as presented in Lemma~\ref{additive-drift} is used to derive upper bounds on the EFHT of Markov chains. To use it, a function $V(\mathrm{x})$ has to be constructed to measure the distance of a state $\mathrm{x}$ to the target state space $\mathcal{X}^*$, where  $V(\mathrm{x}\in \mathcal{X}^*)=0$ and  $V(\mathrm{x}\notin \mathcal{X}^*)>0$. Then, we need to investigate the progress on the distance to $\mathcal{X}^*$ in each step, i.e., $\expct(V(\xi_t)-V(\xi_{t+1}) \mid \xi_t)$. An upper bound on the EFHT can be derived through dividing the initial distance by a lower bound on the progress.
\begin{lemma}[Additive Drift~\cite{he2001drift}]\label{additive-drift}
	Given a Markov chain $\{\xi_t\}^{\infty}_{t=0}$ and a distance function $V(\cdot)$, if for any $t \geq 0$ and any $\xi_t$ with $V(\xi_t) > 0$, there exists a real number $c>0$ such that $\expct(V(\xi_t)-V(\xi_{t+1}) \mid \xi_t) \geq c$,
	 then the EFHT satisfies that $\expct(\tau \mid \xi_0) \leq V(\xi_0)/c.$
\end{lemma}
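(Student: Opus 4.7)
The plan is to prove the additive drift bound by a direct telescoping argument on the stopped distance process, avoiding general optional stopping machinery. First I would dispose of the trivial case $V(\xi_0)=0$, where $\tau=0$ and the bound holds with equality, and otherwise define the stopped process $W_t := V(\xi_{t\wedge \tau})$, which equals $V(\xi_t)$ on $\{\tau > t\}$ and equals $0$ on $\{\tau \leq t\}$. The key one-step inequality I would establish is
\begin{equation}
\mathrm{E}(W_{t+1}\mid \xi_t) - W_t \leq -c\cdot \mathbf{1}_{\{\tau > t\}}.
\end{equation}
On $\{\tau\leq t\}$ both $W_t$ and $W_{t+1}$ are zero, so the inequality holds with equality; on $\{\tau > t\}$ we have $V(\xi_t)>0$, so the drift hypothesis directly yields $\mathrm{E}(V(\xi_{t+1})-V(\xi_t)\mid \xi_t)\leq -c$.

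Next, I would take unconditional expectations and telescope. Summing the inequality $\mathrm{E}(W_{t+1})-\mathrm{E}(W_t) \leq -c\,\mathrm{P}(\tau>t)$ for $t=0,\ldots,T-1$, and using $W_0 = V(\xi_0)$ together with $W_T \geq 0$, produces
\begin{equation}
c\sum_{t=0}^{T-1}\mathrm{P}(\tau>t) \leq V(\xi_0).
\end{equation}
Letting $T\to\infty$ and applying the standard identity $\mathrm{E}(\tau\mid \xi_0)=\sum_{t=0}^{\infty}\mathrm{P}(\tau > t)$ would then give $\mathrm{E}(\tau\mid \xi_0)\leq V(\xi_0)/c$, which is the desired claim.

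The main subtlety to handle carefully is that the drift hypothesis is only assumed on the event $\{V(\xi_t)>0\}$; introducing the stopped process $W_t$ is precisely what lets the one-step bound be written as a single uniform inequality, because after time $\tau$ the process no longer changes and the indicator on the right-hand side shuts off. A secondary bookkeeping point is justifying the $T\to\infty$ limit: each partial sum of $\mathrm{P}(\tau>t)$ is bounded above by $V(\xi_0)/c$, so monotone convergence applies and implicitly also shows $\mathrm{E}(\tau\mid \xi_0)<\infty$. I do not foresee any other genuine obstacle; the whole argument is a few lines once the stopped process is set up correctly.
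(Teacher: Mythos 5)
Your proposal is correct, but note that the paper does not actually prove this statement: Lemma~3 is imported verbatim from He and Yao's work on drift analysis~\cite{he2001drift} and used as a black box, so there is no in-paper proof to compare against. What you have written is essentially the standard proof of additive drift via a stopped process and telescoping, and it is sound: the reduction to the one-step inequality, the telescoping sum $c\sum_{t=0}^{T-1}\mathrm{P}(\tau>t)\leq V(\xi_0)$, and the monotone-convergence passage to $\mathrm{E}(\tau\mid\xi_0)=\sum_{t\geq 0}\mathrm{P}(\tau>t)\leq V(\xi_0)/c$ (which simultaneously gives finiteness of the expected hitting time) are all in order.

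One technical point you should tighten: you state the one-step inequality as $\mathrm{E}(W_{t+1}\mid \xi_t)-W_t\leq -c\cdot\mathbf{1}_{\{\tau>t\}}$, but neither $W_t$, $W_{t+1}$, nor the event $\{\tau>t\}$ is measurable with respect to $\sigma(\xi_t)$ alone---all of them depend on the whole history $\xi_0,\dots,\xi_t$, since $\tau$ is a first hitting time. The inequality should be stated conditional on the natural filtration $\mathcal{F}_t=\sigma(\xi_0,\dots,\xi_t)$; on the event $\{\tau>t\}$ one then invokes the Markov property to replace $\mathrm{E}\bigl(V(\xi_{t+1})\mid\mathcal{F}_t\bigr)$ by $\mathrm{E}\bigl(V(\xi_{t+1})\mid\xi_t\bigr)$, which is where the drift hypothesis applies, and on $\{\tau\leq t\}$ both sides vanish as you say. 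With that adjustment the rest of your argument (taking unconditional expectations under $\mathrm{P}(\cdot\mid\xi_0)$, telescoping, and letting $T\to\infty$) goes through exactly as written. This is a bookkeeping correction, not a gap in the idea.
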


\begin{myproof}{Theorem~\ref{thm:sms-sto-ojzj}}
    Similar to the proof of Theorem~\ref{thm:sms-upper-ojzj}, we divide the optimization procedure  into two phases. That is, the first phase starts after initialization and finishes when all the objective vectors in $F_{I}^*$ have been found; the second phase starts after the first phase and finishes when $0^n$ and $1^n$ are also found. The analysis of the first phase is the same as that of Theorem~\ref{thm:sms-upper-ojzj}, because the objective vectors in $F_I^*$ will always be maintained. That is, the expected number of generations of phase 1 is $O(\mu (n\log n+k^k))$.
	
	Now we analyze the second phase. Without loss of generality, we only consider the expected number of generations for finding $1^n$, and the same bound holds for finding $0^n$ analogously. We use Lemma~\ref{additive-drift},  i.e., additive drift analysis, to prove the bound. Note that the process of SMS-EMOA can be directly modeled as a Markov chain by letting the state of the chain represent a population of SMS-EMOA. Furthermore, the target space consists of all the populations which contain $1^n$. In the following, we do not distinguish a state from its corresponding population.
	 First, we construct the distance function
	\begin{equation}
	V(P)=\begin{cases}
		0 & \text{if }\max_{\bmx\in P} |\bmx|_1=n, \\
		e\mu n^{k/2} &\text{if } n-k/2\le \max_{\bmx\in P} |\bmx|_1 \le n\!-\!1,\\
		e\mu n^{k/2}+1 & \text{if } n-k\le \max_{\bmx\in P} |\bmx|_1 < n-k/2.
	\end{cases}
	\end{equation}
	It is easy to verify that $V(P)=0$ if and only if $1^n\in P$. 
	
	Then, we examine $\expct(V(\xi_t)-V(\xi_{t+1})\mid \xi_t=P)$ for any $P$ with $1^n\notin P$. Assume that currently $\max_{\bmx\in P} |\bmx|_1 =q$, where $n-k\le q\le n-1$. 
	We first consider the case that $n-k/2\le q\le n-1$. To make $V$ decrease, it is sufficient to select the solution in $P$ with $q$ 1-bits and flip its remaining $n-q$ 0-bits, whose probability is 
  $(1/{\mu}) \cdot (1/{n^{n-q}})\cdot    (1-1/n)^{q}\ge 1/({e\mu n^{n-q}})\ge 1/(e\mu n^{k/2})$,
  where the last inequality is by $n-k/2 \leq q$.
	Note that the newly generated solution is $1^n$, which must be included in the population. In this case,  $V$ can decrease by $e\mu n^{k/2}$. To make $V$ increase, the solution in $P$ with $q$ 1-bits needs to removed in the next generation, whose probability is at most $1/2$ by Lemma~\ref{lem:stochastic}. In this case, $V$ can increase by $e\mu n^{k/2}+1-e\mu n^{k/2}=1$. Thus, we have
	 \begin{equation}\label{eq:E1}
	 	\expct(V(\xi_t)-V(\xi_{t+1})\mid \xi_t=P)\ge \frac{e\mu n^{k/2}}{e\mu n^{k/2}} - \frac{1}{2}\cdot 1\ge \frac{1}{2}.
	 \end{equation}
	
	Now we consider the case that $n-k\le q< n-k/2$. Note that in this case, $V$ cannot increase, thus we only need to consider the decrease of $V$. We further consider two subcases. \\
	(1) $q>n-3k/4$. To make $V$ decrease by $1$, it is sufficient to select the solution with $q$ 1-bits, flip $k/4$ 0-bits among the $n-q$ 0-bits, and include the newly generated solution into the population, whose probability is at least
	$
	(1/\mu)\cdot (\binom{n-q}{k/4}/{n^{k/4}})\cdot (1-1/n)^{n-k/4} \cdot (1/2)\ge \binom{k/2}{k/4}/(2e\mu n^{k/4})
	$.
	Thus, $\expct(V(\xi_t)-V(\xi_{t+1})\mid \xi_t=P)\ge  \binom{k/2}{k/4}/{(2e\mu n^{k/4})}$.\\
	(2) $q\le n-3k/4$. To make $V$ decrease by $1$, it is sufficient to select the solution with $q$ 1-bits, flip $n-k/2-q$ 0-bits among the $n-q$ 0-bits, and include the newly generated solution into the population. The probability is at least 
	\begin{equation}
	\frac{1}{\mu}\cdot \frac{\binom{n-q}{n-k/2-q}}{n^{n-k/2-q}}\cdot \Big(1-\frac{1}{n}\Big)^{k/2+q}\cdot \frac{1}{2} 
    \ge \frac{\binom{n-q}{n-k/2-q}}{2e\mu n^{k/2}}
	\ge \frac{\binom{3k/4}{k/2}}{2e\mu n^{k/2}} \ge \frac{\binom{k/2}{k/4}}{2e\mu n^{k/2}},
	\end{equation}
    where the first inequality holds by $n^{n-k/2-q}\leq n^{k/2}$ due to $n-k \leq q$, the second inequality holds by Lemma~\ref{lem:binom} due to $n-q \geq 3k/4$, and the last inequality also holds by Lemma~\ref{lem:binom}. Thus, $\expct(V(\xi_t)-V(\xi_{t+1})\mid \xi_t=P)\ge  \binom{k/2}{k/4}/({2e\mu n^{k/2}})$. \\
	Combining subcases~(1) and~(2), we can derive 
    \begin{equation}\label{eq:E2}
    \begin{aligned}
	\expct(V(\xi_t)-V(\xi_{t+1})\mid \xi_t=P)&\ge  \frac{\binom{k/2}{k/4}}{2e\mu n^{k/2}}= \frac{(k/2)!}{(k/4)!\cdot (k/4)!} \cdot \frac{1}{2e\mu n^{k/2}}\\
    &\ge \frac{\sqrt{\pi k}(k/(2e))^{k/2}}{(e^{1/12}\sqrt{\pi k/2} \cdot (k/(4e))^{k/4})^2} \cdot \frac{1}{2e\mu n^{k/2}} = \frac{2^{k/2}}{e^{7/6} \sqrt{\pi k}\mu n^{k/2}},
    \end{aligned}
	\end{equation}
where the second inequality holds by Lemma~\ref{lem:stirling}. By Eqs.~\eqref{eq:E1} and~\eqref{eq:E2}, we have $\expct(V(\xi_t)-V(\xi_{t+1})\mid \xi_t=P)\ge 2^{k/2}/(e^{7/6} \sqrt{\pi k}\mu n^{k/2})$.
	Then, by Lemma~\ref{additive-drift} and $V(P)\le e\mu n^{k/2}+1$, the expected number of generations for finding $1^n$ is at most $(e\mu n^{k/2}+1)\cdot (e^{7/6} \sqrt{\pi k}\mu n^{k/2})/2^{k/2} =O(\sqrt{k}\mu^2 n^k/2^{k/2})$. 
	
	Thus, combining the two phases, the expected number of generations for finding the whole Pareto front is $O(\mu (n\log n+k^k))+O(\sqrt{k}\mu^2 n^k/2^{k/2})=O(\sqrt{k}\mu^2 n^k/2^{k/2})$, where the equality holds by $2\le k<n/2$.  Thus, the theorem holds.
\end{myproof}

Comparing the results of Theorems~\ref{thm:sms-lower-ojzj} and~\ref{thm:sms-sto-ojzj}, we can find that when $k=\Omega(n)\wedge k=n/2-\Omega(n)$ and $2(n-2k+4)\le \mu=poly(n)$, using the stochastic population update can bring an acceleration of $\Omega(2^{k/2}/(\sqrt{k}\mu^2))$, i.e., exponential acceleration. 
The main reason for the acceleration is that introducing randomness into the population update procedure can make SMS-EMOA  go across inferior regions between different Pareto optimal solutions more easily. Specifically, the original deterministic population update method prefers non-dominated solutions; thus for \ojzj\ whose objective vectors on the Pareto front are far away in the solution space, SMS-EMOA is easy to get trapped. However, the stochastic population update method allows dominated solutions (i.e., solutions with the number of 1-bits in $[1\dots k-1]\cup[n-k+1\dots n-1]$), to participate in the evolutionary process, thus making SMS-EMOA able to follow an easier path in the solution space to find the extreme Pareto optimal solutions $0^n$ and $1^n$.

\subsection{Analysis of SMS-EMOA Solving \RRMO}

Now we analyze SMS-EMOA solving the \rrmo\ problem.
We prove in Theorems~\ref{thm:sms-upper-rrmo} and~\ref{thm:sms-lower-rrmo} that the upper and lower bounds on the expected number of generations of SMS-EMOA using the original population update in Algorithm~\ref{alg:smspopdate} solving \rrmo\ are $O(\mu n^{n/5-2})$ and $\Omega(n^{n/5-1})$, respectively.
We also prove in Theorem~\ref{thm:sms-sto-rrmo} that by using the stochastic population update in Algorithm~\ref{alg:sms-non-popdate} to replace the original population update procedure in Algorithm~\ref{alg:smspopdate}, SMS-EMOA can solve \rrmo\ in $O(\mu^2 n^{n/5+1/2}/2^{n/10})$ expected number of generations, implying an exponential acceleration for $\mu=poly(n)$.
Note that we use $poly(n)$ to denote any polynomial of $n$.

The proof of Theorem~\ref{thm:sms-upper-rrmo} is inspired by that of Theorem~10 in~\cite{dang2023crossover}, which analyzes the running time of GSEMO solving \rrmo. That is, we divide the optimization procedure into five phases, where the first phase aims at finding a solution with $3n/5$ 1-bits, the second phase aims at finding a solution in $G'=\{\bmx \mid |\bmx|_1 = 3n/5 \wedge \lz(\bmx) + \tz(\bmx) = 2n/5\}$, the third phase aims at finding all the solutions in $G'$, the fourth phase aims at finding a solution in  $H= \{\bmx \mid |\bmx|_1 = 4n/5 \wedge \lz(\bmx) + \tz(\bmx) = n/5\}$ (i.e., a Pareto optimal solution), and the fifth phase aims at finding all the solutions in $H$ (i.e., all the Pareto optimal solutions).
\begin{theorem}\label{thm:sms-upper-rrmo}
	For SMS-EMOA solving \rrmo, if using a population size $\mu$ such that $\mu\ge 2n/5+1$, then the expected number of generations for finding the Pareto front is $O(\mu n^{n/5-2})$.
\end{theorem}
Before proving Theorem~\ref{thm:sms-upper-rrmo}, we first present Lemma~\ref{lem:sms-upper-rrmo}, which shows that once a solution is found, then a weakly dominating solution will always be maintained in the population. 
\begin{lemma}\label{lem:sms-upper-rrmo}
    For SMS-EMOA solving \rrmo, if using a population size $\mu$ such that $\mu\ge 2n/5+1$, then \vspace{-1em}
    \begin{itemize}
        \item if a solution with $i$ ($i\le 3n/5$) 1-bits is found and no solution in $H$ has been found, then a solution with $j$ ($i\le j\le 3n/5$) 1-bits will always be maintained in the population;
        \item if a solution $\bmx\in G'$ is found and no solution in $H$ has been found, then $\bmx$ will always be maintained in the  population;
        \item if a solution in $H$  is found, then it will always be maintained in the population.
    \end{itemize}
\end{lemma}
\begin{myproofd}
    First, we consider the first claim. Suppose a solution $\bmx$ with $i$ 1-bits ($1\le i\le 3n/5$) is found. If $\bmx\in R_s$ ($s\ge 2$) in the population update procedure, i.e., $\bmx$ is dominated by a solution $\bmy$, then $\bmy$ must have $j$ ($i\le j\le 3n/5$) 1-bits. Note that only one solution in the last non-dominated set $R_v$ will be removed in the population update procedure, thus $\bmy$ must be maintained, implying that the claim holds. If $\bmx\in R_1$, i.e., $\bmx$ is non-dominated, then $\bmx$ can be removed only if all the solutions in $Q$ (i.e., the union of the current population and the newly generated solution) belong to $R_1$. 
    Then, by the definition of \rrmo, all the solutions in $Q$ must have $i$ 1-bits. Thus, the claim also holds.
    
    Now we consider the second claim. Note that only one solution will be removed in each generation, thus we only need to consider the case that any solution $\bmy\in Q$ is different from $\bmx$. By the definition of \rrmo, we can see that $\bmx$ cannot be dominated by any other solution in $Q$, thus $\bmx\in R_1$ in the \smspopdate\ procedure. Meanwhile, similar to the analysis of Eq.~\eqref{eq:sms-ojzj-lem-1}, there must exist a region around $\bmf(\bmx)$ that cannot be covered by any objective vector in $\bmf(Q\setminus \{\bmx\})$, implying $\Delta_{\bmr}(\bmx,R_1)=HV_{\bmr}(R_1)-HV_{\bmr}(R_1\setminus \{\bmx\})>0$. Note that any solution with less than $3n/5$ 1-bits or more than $3n/5$ 1-bits must be dominated by $\bmx$ (note that we assume that the solutions in $H$ are not found), thus $R_1$ can only consist of solutions with $3n/5$ 1-bits. For any solution with $3n/5$ 1-bits, its first objective value can only have at most $2n/5+1$ different choices, thus the number of different objective vectors of the solutions in $R_1$ is at most $2n/5+1$.  If at least two solutions in $R_1$ have the same objective vector, then they must have a zero $\Delta$-value, because removing one of them will not decrease the hypervolume covered. Thus, there exist at most $2n/5+1\le \mu$ solutions in $R_1$ with $\Delta$-value larger than zero, implying $\bmx$ will be maintained in the next population.
    
    The proof of the third claim is similar to that of the second one, and the main difference is that there exist at most $n/5+1$ solutions in $R_1$ with $\Delta$-value larger than zero, which will not influence the analysis. Combining the three cases, the lemma holds. 
\end{myproofd}
\begin{myproof}{Theorem~\ref{thm:sms-upper-rrmo}}
    We divide the optimization procedure into five phases, and derive the expected number of generations of each phase separately, whose sum will result in the upper bound on the total expected number of generations. Note that in the following analysis, we assume that there exists a solution in the initial population which  has at most $3n/5$ 1-bits (i.e., belongs to  $G= \{\bmx \mid |\bmx|_1 \le 3n/5\}$), and the analysis of the other case is put in the end of the proof. We also pessimistically assume that in the analysis of phases~1--3, no solution in $H$ has been found in the population, because otherwise the analysis can directly move to phase~4.

    \textit{The first phase: a solution with exactly $3n/5$ 1-bits is maintained in the population.}
    Assume that $\max_{\bmx\in P\cap G} |\bmx|_1=i$, $i<3n/5$, where $P$ denotes the current population, and let $\bmx^*$ be one corresponding solution. Then, by selecting $\bmx^*$ and flipping one of the 0-bits, a solution in $G$ with $i+1$ 1-bits can be generated. Thus, by the first claim of Lemma~\ref{lem:sms-upper-rrmo}, a solution with $j$ ($i+1\le j\le 3n/5$) 1-bits will be maintained in the population. By repeating the above procedure, a solution with exactly $3n/5$ 1-bits can be found. Note that the probability of selecting a specific solution in $P$ is $1/\mu$, the probability of flipping one of the 0-bits is $(n-|\bmx^*|_1)\cdot (1/n) \cdot (1-1/n)^{n-1}\ge (n-i)/(en)$, thus the total expected number of generations of this phase is at most $\sum_{i=0}^{3n/5-1} (\mu en)/(n-i)=O(\mu n \log n)$.

    \textit{The second phase: a solution in $G'$ is maintained in the population.}
    First, we show that the maximal $f_1$-value, i.e., $n|\cdot|_1+\tz(\cdot)$, will not decrease. 
    Let $D$ denote the set of solutions in $P$ with the maximal $f_1$-value. If $D$ contains at least two solutions, then the claim must hold because only one solution will be removed in each generation. If $D$ contains only one solution, then the solution must belong to $R_1$ and have a positive $\Delta$-value. Note that a solution with exactly $3n/5$ 1-bits has been found and no solution in $H$ has been found, thus the solutions in $R_1$ must have exactly $3n/5$ 1-bits, implying that they can have at most $2n/5+1$ different $f_1$-values. 
    Note that for each $f_1$-value, only one corresponding solution can belong to $R_1$ and have a positive $\Delta$-value. Thus, the solution in $D$ is among the best $2n/5+1\le \mu$ solutions, and thus will not be removed.
    
    Now, we analyze the expected number of generations for finding a solution in $G'$.
    Assume that $\max_{\bmx\in P, |\bmx|_1=3n/5}f_1(\bmx)=i$, $i< 3n^2/5+2n/5$, and let $\bmx^*$ denote a corresponding solution. Then, by selecting $\bmx^*$ and flipping 
    the last 1-bit as well as one 0-bit before the last 1-bit, a solution with $3n/5$ 1-bits and more trailing 0-bits can be generated. Note that the probability of selecting a specific solution in $P$ is $1/\mu$, and the probability of flipping the desired 1-bit and 0-bit is $(1/n^2)\cdot (1-1/n)^{n-2}\ge 1/(en^2)$. Thus, the expected number of generations to increase $f_1$ by 1 is at most $\mu en^2$. To find a solution in $G'$, it is sufficient to increase $f_1$ at most $2n/5$ times (in this case, the solution $1^{3n/5}0^{2n/5}$ can be found). Thus, the total expected number of generations of this phase is at most $\mu en^2\cdot (2n/5)=O(\mu n^3)$.

    \textit{The third phase: the whole $G'$ is maintained in the population.}
    Suppose $G'$ is not covered, then there exists a solution  $\bmy\in G'\setminus P$  which can be generated from a solution $\bmx\in G'\cap P$ by flipping the first 1-bit in the solution and the first 0-bit in the trailing 0-bits string, or flipping the last 1-bit in the solution and the last 0-bit in the leading 0-bits string. 
    By the second claim in Lemma~\ref{lem:sms-upper-rrmo}, $\bmy$ will be included in the next population. Then, the whole $G'$ can be found by repeating the above procedure at most $2n/5$ times. Note that the probability of selecting a specific solution in $P$ is $1/\mu$, the probability of flipping the desired 1-bit and 0-bit is $(1/n^2) \cdot (1-1/n)^{n-2}\ge 1/(en^2)$, thus the total expected number of generations of this phase is at most $\mu en^2\cdot 2n/5=O(\mu n^3)$.

    \textit{The fourth phase: a solution in $H$ is maintained in the population.}
    Let $G'':=\{0^i 1^{3n/5}0^{2n/5-i}\mid n/10 \le i\le 3n/10\}$ denote the subset of $G'$. For each $\bmx=0^i 1^{3n/5}0^{2n/5-i}, n/10 \le i\le n/5$, flipping the consecutive $j$ 0-bits ($j\le i$) before the 1-bits string and the consecutive $n/5-j$ 0-bits after the 1-bits string can generate a solution in $H$. That is, there are $i\ge n/10$ different ways to generate a solution in $H$. Similarly, for each $\bmx=0^i 1^{3n/5}0^{2n/5-i}, n/5 < i\le 3n/10$, flipping the consecutive $j$ 0-bits ($j\le 2n/5-i$) after the 1-bits string and the consecutive $n/5-j$ 0-bits before the 1-bits string can generate a solution in $H$. That is, there are $2n/5-i\ge n/10$ different ways to generate a solution in $H$. Thus, once a solution in $G''$ is selected, the probability of generating a solution in $H$ is at least $n/10\cdot 1/n^{n/5}\cdot (1-1/n)^{4n/5}\ge 1/(10en^{n/5-1})$. Because the size of $G''$ is $n/5+1$, we can derive that the probability of generating a solution in $H$ is at least $((n/5+1)/\mu)\cdot 1/(10en^{n/5-1})$ in each generation, implying an upper bound $O(\mu n^{n/5-2})$ on the expected number of generations of this phase.

    \textit{The fifth phase: the whole $H$ is maintained in the population.}
    The proof is almost the same as that of the third phase, except that we only need to find at most $n/5$ remaining solutions in $H$. Thus, we can directly derive that the total expected number of generations of this phase is at most $\mu en^2\cdot n/5=O(\mu n^3)$.

    Combining all the phases, the total expected number of generations is $O(\mu n^{n/5-2})$. Note that such upper bound relies on the assumption that there exists a solution in the initial population which has at most $3n/5$ 1-bits. Now we consider the case that all the solutions in the initial population have more than $3n/5$ 1-bits. By Chernoff bounds, the probability that  an initial solution has more than $3n/5$ 1-bits is at most $2^{-\Omega(n)}$. Let $B$ denote the event that all the initial solutions have more than $3n/5$ 1-bits, then we have $\Pr(B)=2^{-\Omega(\mu n)}$. If $B$ happens, by selecting any solution in the population, and flipping at most $2n/5$ 1-bits, a solution with at most $3n/5$ 1-bits can be found, whose probability is at least $(1/n^{2n/5}) \cdot (1-1/n)^{3n/5}\ge 1/(en^{2n/5})$. After finding a solution with at most $3n/5$ 1-bits, we can directly use the above analysis of the five phases. Thus, the expected number of generations when $B$ happens is at most $en^{2n/5}+O(\mu n^{n/5-2})$. Now, by the law of total expectation, we have
    \begin{align}
    \expct(T) &= \expct(T\mid \neg B)\cdot \Pr(\neg B)+\expct(T\mid B)\cdot \Pr(B)\\
    &\le \expct(T\mid \neg B)+\expct(T \mid B)\cdot \Pr(B)\\
    &\le O(\mu n^{n/5-2})+\big(en^{2n/5}+O(\mu n^{n/5-2})\big)\cdot 2^{-\Omega(\mu n)}\\
    &= O(\mu n^{n/5-2})+en^{2n/5}\cdot 2^{-\Omega(\mu n)} = O(\mu n^{n/5-2}),
    \end{align}
    where the last equality holds for $\mu=\omega(\log n)$. Note that $\mu$ is actually at least $2n/5+1$ as required by Lemma~\ref{lem:sms-upper-rrmo}. Thus, the theorem holds.
\end{myproof}

From the above proof, we can find that compared with Theorem~10 in~\cite{dang2023crossover}, there are still some differences in the proof procedure. First, instead of GSEMO, we analyze SMS-EMOA here, which employs different population update methods. Second, the crossover operator is not used in SMS-EMOA. 
Furthermore, due to the difference in population update, the analysis of the first phase in~\cite{dang2023crossover}, (i.e., finding a solution with at most $3n/5$ 1-bits), does not hold anymore, and we use the law of total expectation to handle the running time in this phase.

The proof idea of Theorem~\ref{thm:sms-lower-rrmo} is similar to that of Theorem~\ref{thm:sms-lower-ojzj}. That is, all the solutions in the initial population have at most $3n/5$ 1-bits with large probability, and then SMS-EMOA needs to flip $n/5$ 0-bits of a solution simultaneously to find a Pareto optimal solution, leading to a large running time. Note that SMS-EMOA is covered by elitist $(\mu+\lambda)$ black-box algorithms studied in~\cite{dang2023crossover}. Thus, the general lower bound $2^{\Omega(n)}$ for solving \rrmo~derived in Theorem~9 in~\cite{dang2023crossover} also applies here, but it is weaker as it has a constant base instead of order $n$ here.

\begin{theorem}\label{thm:sms-lower-rrmo}
	For SMS-EMOA solving \rrmo, if using a population size $\mu$ such that $\mu=poly(n)$, then the expected number of generations for finding the Pareto front is $\Omega(n^{n/5-1})$.
\end{theorem}
\begin{myproofd}
    Let $A$ denote the event that all the initial solutions have at most $3n/5$ 1-bits. By Chernoff bounds, the probability that  an initial solution has more than $3n/5$ 1-bits is at most $2^{-\Omega(n)}$, thus
    \[\Pr(A)\ge (1-2^{-\Omega(n)})^\mu\ge 1-\mu\cdot 2^{-\Omega(n)}=1-o(1), \]
    where the last inequality is by Bernoulli's inequality, and the equality is by the condition $\mu=poly(n)$.

    Next we show that given event $A$, the expected number of generations for finding a solution in $H$ is $\Omega(n^{n/5-1})$. Starting from the initial population, if a solution $\bmx\notin H$ with $|\bmx|_1> 3n/5$ is generated in some generation, it will be deleted because it is dominated by all solutions in the current population. Note that any solution in the population has at most $3n/5$ 1-bits, and any solution in $H$ has $4n/5$ 1-bits. Thus, given any solution $\bmx$ in the population, the probability of generating a specific solution in $H$ from $\bmx$ by mutation is at most $1/n^{n/5}$. 
    Note that the size of $H$ is $n/5+1$, thus
    the probability of generating a solution in $H$ from $\bmx$ is at most $(n/5+1)/n^{n/5}$, 
    implying that the expected number of generations is at least $\Omega(n^{n/5-1})$.

    Combining the above analyses, the expected number of generations for finding the Pareto front is at least $(1-o(1))\cdot \Omega(n^{n/5-1})=\Omega(n^{n/5-1})$.
\end{myproofd}

Now, we analyze the effectiveness of the stochastic population update.
The proof idea of Theorem~\ref{thm:sms-sto-rrmo} is similar to that of Theorem~\ref{thm:sms-upper-rrmo}, i.e., dividing the optimization procedure into five phases. However, for the fourth phase, we use additive drift analysis to prove an bound on the expected number of generations, just as the analysis of the second phase in the proof of Theorem~\ref{thm:sms-sto-ojzj}.
\begin{theorem}\label{thm:sms-sto-rrmo}
    For SMS-EMOA solving \rrmo, if using the stochastic population update in Algorithm~\ref{alg:sms-non-popdate}, and a population size $\mu$ such that $\mu\ge 2(2n/5+2)$, then the expected number of generations for finding the Pareto front is $O(\mu^2 n^{n/5+1/2}/2^{n/10})$.
\end{theorem}

Before proving Theorem~\ref{thm:sms-sto-rrmo}, we first note that Lemma~\ref{lem:sms-upper-rrmo} also applies to SMS-EMOA with stochastic population update when the population size $\mu\ge 2(2n/5+2)$. That is, for SMS-EMOA solving \rrmo, if using the stochastic population update in Algorithm~\ref{alg:sms-non-popdate}, and a population size $\mu$ such that $\mu\ge 2(2n/5+2)$, then the three claims in Lemma~\ref{lem:sms-upper-rrmo} also hold here, i.e., once a solution is found, a weakly dominating solution will always be maintained in the population. The first claim of Lemma~\ref{lem:sms-upper-rrmo} directly holds here. We can see from  the proofs of the second and the third claims of Lemma~\ref{lem:sms-upper-rrmo} that there exist at most $2n/5+1$ solutions in $R_1$ with $\Delta$-value larger than zero, implying that the found solution $\bmx$ is among the best $2n/5+1$ solutions if it is selected for competition. As the number of solutions selected for competition in line~1 of Algorithm~\ref{alg:sms-non-popdate} is  $\lfloor (\mu+1)/2\rfloor\ge 2n/5+2$, $\bmx$ cannot be the worst solution and thus will not be removed.   

\begin{myproof}{Theorem~\ref{thm:sms-sto-rrmo}}
    Similar to the proof of Theorem~\ref{thm:sms-upper-rrmo}, we divide the optimization procedure into five phases. The analysis of the first, second, third and fifth phases is the same as that of Theorem~\ref{thm:sms-upper-rrmo}, because the stochastic population update does not affect the selection and mutation operator, and the three claims of Lemma~\ref{lem:sms-upper-rrmo} also hold here. The main difference is the analysis of the fourth phase, i.e., finding a solution in $H= \{\bmx \mid |\bmx|_1 = 4n/5 \wedge \lz(\bmx) + \tz(\bmx) = n/5\}$ after all the solutions in $G'=\{\bmx \mid |\bmx|_1 = 3n/5 \wedge \lz(\bmx) + \tz(\bmx) = 2n/5\}$ are maintained in the population. We will show that the expected number of generations of the fourth phase can be improved from $O(\mu n^{n/5-2})$ to $O(\mu^2 n^{n/5+1/2}/2^{n/10})$, whose proof is accomplished by using Lemma~\ref{additive-drift}, i.e., additive drift analysis.
    
    Let $D=\{\bmx\in \{0,1\}^n \mid \lz(\bmx)+\tz(\bmx)\ge n/5\}$.
    We first construct a distance function $V(P)$ as, 
    \[
    V(P)=\begin{cases}
    0 & \text{if }\max_{\bmx\in P\cap D} |\bmx|_1 = 4n/5, \\
    e\mu n^{n/10} &\text{if } 7n/10 \le \max_{\bmx\in P\cap D} |\bmx|_1 \le 4n/5-1,\\
    e\mu n^{n/10}+1 & \text{if } 3n/5\le \max_{\bmx\in P\cap D} |\bmx|_1  < 7n/10.
    \end{cases}
    \]
    It is easy to verify that $V=0$ if and only if $H\cap P\neq \emptyset$ (i.e., a solution in $H$ is maintained in the population). Then, we examine $\expct(V(\xi_t)-V(\xi_{t+1})\mid \xi_t=P)$ for any $P$ with $H\cap P= \emptyset$. Assume that currently $\max_{\bmx\in P\cap D} |\bmx|_1=q$, where $3n/5\le q\le 4n/5-1$, and let $\bmx^*$ denote a corresponding solution. 
    
    We first consider the case that $7n/10\le q\le 4n/5-1$. 
    To make $V$ decrease, it is sufficient to select $\bmx^*$ for mutation, and flip $4n/5-q$ 0-bits such that the 1-bits in the newly generated solution are consecutive, whose probability is at least $(1/\mu) \cdot (1/n^{4n/5-q})\cdot (1-1/n)^{n/5+q}\ge 1/(e\mu n^{4n/5-q})\ge 1/(e\mu n^{n/10})$. Then, the newly generated solution will be included in the population. In this case, the decreased value of $V$ is $e\mu n^{n/10}$. To make $V$ increase, $\bmx^*$ needs to be removed in the next generation, whose probability is at most 1/2 by Lemma~\ref{lem:stochastic}. In this case, the increased value of $V$ is at most $e\mu n^{n/10}+1-e\mu n^{n/10}=1$. Thus,
    \[
    \expct(V(\xi_t)-V(\xi_{t+1})\mid \xi_t=P)\ge \frac{e\mu n^{n/10}}{e\mu n^{n/10}}  -\frac{1}{2}\cdot 1\ge \frac{1}{2}.
    \]
    Now we consider the case that $3n/5\le q< 7n/10$. Note that in this case, $V$ cannot increase, thus we only need to consider the expected decreased value of $V$. We further consider two cases. 
    
    (1) $q>13n/20$. Note that $\lz(\bmx^*)+\tz(\bmx^*)\ge n/5$ and $|\bmx^*|_1<7n/10$, thus there exist at least $4n/5-7n/10=n/10$ zero bits in $\bmx^*$ such that flipping $n/20$ of these zero bits can generate a solution $\bmy$ with $\lz(\bmy)+\tz(\bmy)\ge n/5$ and $|\bmy|_1=|\bmx^*|_1+n/20>13n/20+n/20=7n/10$. The new solution $\bmy$ can be maintained in the next population with probability at least $1/2$ by Lemma~\ref{lem:stochastic}, implying that $V$ can decrease by $1$. Thus, 
    \[
    \expct(V(\xi_t)-V(\xi_{t+1})\mid \xi_t=P)\ge  
    \frac{1}{\mu} \cdot \frac{\binom{n/10}{n/20}}{n^{n/20}}\cdot \Big(1-\frac{1}{n}\Big)^{19n/20} \cdot \frac{1}{2}\ge\frac{\binom{n/10}{n/20}}{2e\mu n^{n/20}}.
    \]
    
    (2) $q\le 13n/20$. Note that $\lz(\bmx^*)+\tz(\bmx^*)\ge n/5$ and $|\bmx^*|_1\le 13n/20$, thus there exist at least $4n/5-13n/20=3n/20$ zero bits in $\bmx^*$ such that flipping $n/10$ of these zero bits can generate a solution $\bmy$ with $\lz(\bmy)+\tz(\bmy)\ge n/5$ and $|\bmy|_1=|\bmx^*|_1+n/10\ge 3n/5+n/10=7n/10$. 
    Thus, we have
    \[
    \expct(V(\xi_t)-V(\xi_{t+1})\mid \xi_t=P)\ge 
    \frac{1}{\mu}\cdot \frac{\binom{3n/20}{n/10}}{n^{n/10}}\cdot \Big(1-\frac{1}{n}\Big)^{9n/10}\cdot \frac{1}{2} \ge \frac{\binom{3n/20}{n/10}}{2e\mu n^{n/10}}\ge \frac{\binom{n/10}{n/20}}{2e\mu n^{n/10}},
    \]
    where the last inequality holds by Lemma~\ref{lem:binom}.
    
    Combining the analysis of cases~(1) and~(2), we can derive 
    \[
    \begin{aligned}
    \expct(V(\xi_t)-V(\xi_{t+1})\mid \xi_t=P) &\geq  \frac{\binom{n/10}{n/20}}{2e\mu n^{n/10}}= \frac{(n/10)!}{(n/20)!\cdot (n/20)!}\cdot\frac{1}{2e\mu n^{n/10}}\\
    &\ge \frac{\sqrt{\pi n/5}(n/(10e))^{n/10}}{(e^{1/12}\sqrt{\pi n/10} (n/(20e))^{n/20})^2}\cdot\frac{1}{2e\mu n^{n/10}} = \frac{\sqrt{5} \cdot 2^{n/10}}{e^{7/6}\sqrt{\pi}\mu n^{n/10 + 1/2}},
    \end{aligned}
    \]
    where the second inequality holds by Lemma~\ref{lem:stirling}.
    Note that $V(P)\le e\mu n^{n/10}+1$, thus by Lemma~\ref{additive-drift}, the expected number of generations of the fourth phase is at most $(e\mu n^{n/10}+1)\cdot (e^{7/6}\sqrt{\pi}\mu n^{n/10+1/2})/(\sqrt{5}\cdot 2^{n/10})=O(\mu^2 n^{n/5+1/2}/2^{n/10})$. 
    
    Note that the expected number of generations of the other phases is $O(\mu n^3)$, thus the total expected number of generations is $O(\mu n^3)+O(\mu^2 n^{n/5+1/2}/2^{n/10})=O(\mu^2 n^{n/5+1/2}/2^{n/10})$. Then, following the analysis in the last paragraph of Theorem~\ref{thm:sms-upper-rrmo}, we can derive that 
    \begin{equation}\label{eq:sms-sto-total}
    \begin{aligned}
    \expct(T)\le \expct(T\mid\neg B)+\expct(T\mid B)\cdot \Pr(B)&\le O\Big(\frac{\mu^2 n^{n/5+1/2}}{2^{n/10}}\Big)+en^{2n/5}\cdot  2^{-\Omega(\mu n)}=O\Big(\frac{\mu^2 n^{n/5+1/2}}{2^{n/10}}\Big),
    \end{aligned}
    \end{equation}
    where the equality is by $\mu\ge 2(2n/5+2)$. Thus, the theorem holds.
\end{myproof}

Comparing the results of Theorems~\ref{thm:sms-lower-rrmo} and~\ref{thm:sms-sto-rrmo}, we can find that when $2(2n/5+2)\le \mu=poly(n)$, using the stochastic population update can bring an acceleration of at least $\Omega(2^{n/10}/(\mu^2 n^{3/2}))$, i.e., exponential acceleration.
The main reason for the acceleration is that introducing randomness into the population update procedure can make SMS-EMOA  go across inferior regions between Pareto optimal solutions and sub-optimal solutions (i.e., solutions that can only be dominated by Pareto optimal solutions). Specifically, the original deterministic population update method prefers non-dominated solutions; thus for \rrmo\ whose Pareto optimal solutions are far away from sub-optimal solutions in the decision space, SMS-EMOA is easy to get trapped.
However, the stochastic population update method allows dominated solutions (i.e., solutions with the number of 1-bits in $[3n/5+1\dots 4n/5-1]$), to participate in the evolutionary process, thus making SMS-EMOA able to follow an easier path in the solution space to find Pareto optimal solutions from sub-optimal solutions.

\section{Running Time Analysis of NSGA-II}\label{sec-nsga}

In this section, we analyze the expected running time of NSGA-II in Algorithm~\ref{alg:nsgaii} using the deterministic population update in Algorithm~\ref{alg:nsgapopdate} or the stochastic population update in Algorithm~\ref{alg:nsga-non-popdate} for solving the \ojzj\ and \rrmo\ problems, which shows that the stochastic population update can bring exponential acceleration for NSGA-II as well. Since
NSGA-II generates $\mu$ offspring solutions in each generation, its running time is $\mu$ times the number of generations. 
Note that the upper bounds of NSGA-II solving the two problems have been analyzed~\cite{dang2023crossover,doerr2023ojzj}, and the upper bounds are not relevant for showing the superiority of the stochastic population update, thus we only present the lower bounds here. 

\subsection{Analysis of NSGA-II Solving \ojzj}

First we analyze  NSGA-II solving the \ojzj\ problem.  We prove in Theorem~\ref{thm:nsga-lower-ojzj} that the lower bound on the expected number of generations of NSGA-II using the original population update in Algorithm~\ref{alg:nsgapopdate} solving \ojzj\ is $\Omega (n^k/\mu)$. 
 Next, we prove in Theorem~\ref{thm:nsga-sto-ojzj} that by using the stochastic population update in Algorithm~\ref{alg:nsga-non-popdate} to replace the original population update procedure in Algorithm~\ref{alg:nsgapopdate}, NSGA-II can solve \ojzj\ in $O(\sqrt{k}(n/2)^k)$ expected number of generations, implying a substantial  acceleration for large $k$ and not too large $\mu$.
 
The lower bound of NSGA-II solving \ojzj\ has been derived in Theorem~8 of~\cite{doerr2023lower}. That is, for NSGA-II solving \ojzj, if using a population size $\mu=c(n-2k+3)$ for some $c\ge 4$ such that $ck^2=o(n)$, then the expected number of generations is at least $3n^k/(2(4/(e-1)+o(1)))$. Here, we will also present Theorem~\ref{thm:nsga-lower-ojzj} which relaxes the restriction on $k$ and $\mu$ at the cost of the tightness. The proof idea of Theorem~\ref{thm:nsga-lower-ojzj}  is similar to that of Theorem~\ref{thm:sms-lower-ojzj}, i.e., all the solutions in the initial population belong to the inner part  $S_{I}^*$  (in Eq.~\eqref{eq:SI}) of the Pareto set with a large probability, and then the algorithm needs to flip $k$ bits simultaneously to find the extreme Pareto optimal solution $1^n$ (or $0^n$). The main difference is that NSGA-II reproduces $\mu$ solutions in each generation, implying that the probability of reproducing the extreme solution $1^n$ in each generation is at most $\mu /n^k$ instead of $1/n^k$.  Thus, the total expected number of generations is at least $\Omega(n^k/\mu)$ instead of $\Omega(n^k)$.

\begin{theorem}\label{thm:nsga-lower-ojzj}
    For NSGA-II solving \ojzj\ with $n-2k=\Omega(n)$, if using a population size $\mu$ such that $\mu=poly(n)$, then the expected number of generations for finding the Pareto front is $\Omega(n^k/\mu)$.
\end{theorem}
\begin{myproofd}
Let $A$ denote the event that all the solutions in the initial population belong to $S_I^*$, i.e., for any solution $\bmx$ in the initial population, $k\le |\bmx|_1\le n-k$. Because the population initialization of NSGA-II is the same as that of SMS-EMOA (i.e., sampling $\mu$ solutions from $\{0,1\}^n$ uniformly at random), we can directly use the analysis of the first paragraph in the proof of Theorem~\ref{thm:sms-lower-ojzj} to derive that event $A$ happens with probability $1-e^{-\Omega(n)}$. 
	
Next we show that given event $A$, the expected number of generations for finding the whole Pareto front is at least $n^k/\mu$. Starting from the initial population, if a solution $\bmx$ with $1\le |\bmx|_1\le k-1$ or $n-k+1\le |\bmx|_1\le n-1$ is generated in some generation, it will be deleted because it is dominated by all solutions in the current population. The population update of NSGA-II depends on non-dominated sorting and crowding distance, as shown in Algorithm~\ref{alg:nsgapopdate}. Thus, the extreme solution $1^n$ can only be generated by selecting a solution in $S_I^*$ and flipping all the 0-bits, whose probability is at most $1/n^k$. Given that each solution in the current population will be used for mutation, the probability of generating \(1^n\) in one generation is at most \(\mu/n^k\). Thus, the expected number of generations for finding $1^n$ is at least $n^k/\mu$. Combining the above analyses, the expected number of generations for finding the whole Pareto front is at least $(1-e^{-\Omega(n)})\cdot n^k/\mu=\Omega(n^k/\mu)$. 
\end{myproofd}

The proof idea of Theorem~\ref{thm:nsga-sto-ojzj} is similar to that of Theorem~\ref{thm:sms-sto-ojzj}, i.e., dividing the optimization procedure into two phases, which are to find $F_{I}^*$ and $F^* \setminus F_{I}^*=\{(k,n+k),(n+k,k)\}$, respectively. However, we use the argument of ``lucky way''~\cite{doerr2021exponential} instead of additive drift to analyze the second phase. The basic idea of  ``lucky way'' is to find a sequence of events such that the target solution can be found starting from the current solution by following a specific way. Consider the sequence of events as a stage, then
by computing a lower bound $p$ on the  probability of occurring the sequence of events, we can derive an upper bound $1/p$ on the expected number of stages until the sequence of events happens. Then, the total expected number of generations for finding the target solution is upper bounded by the length of the sequence times $1/p$. It is also interesting to note that most of the existing analyses of the ``lucky way" sequence have been used to derive exponential upper bounds, while our analysis leads to polynomial upper bounds when $k$ is a constant.

\begin{theorem}\label{thm:nsga-sto-ojzj}
    For NSGA-II solving \ojzj, if using the stochastic population update in Algorithm~\ref{alg:nsga-non-popdate}, and a population size $\mu$ such that $\mu\ge 8(n-2k+3)$, then the expected number of generations for finding the Pareto front is $O(\sqrt{k}(n/2)^k)$.
\end{theorem}
Before proving Theorem~\ref{thm:nsga-sto-ojzj}, we first present Lemma~\ref{lem:nsga-sto-ojzj}, which shows that given a proper value of $\mu$, an objective vector on the Pareto front will always be maintained once it has been found.

\begin{lemma}\label{lem:nsga-sto-ojzj}
    For NSGA-II solving \ojzj,  if using the stochastic population update in Algorithm~\ref{alg:nsga-non-popdate}, and a population size $\mu$ such that $\mu\ge 8(n-2k+3)$, then an objective vector $\bmf^*$ on the Pareto front will always be maintained once it has been found.
\end{lemma}
\begin{myproofd}
    Suppose an objective vector $\bmf^*$ on the Pareto front is found. Let $C$ denote the set of solutions in $P\cup P'$ with an objective vector of $\bmf^*$, which are selected for competition in line~1 of Algorithm~\ref{alg:nsga-non-popdate} (note that $C$ is a multiset). Then, any solution in $C$ has rank 1, because these solutions are Pareto optimal. If the solutions in $C$ are sorted when computing the crowding distance in line~7 of Algorithm~\ref{alg:nsga-non-popdate}, the solution (denoted as $\bmx^*$) that is put in the first or the last position among these solutions will have a crowding distance larger than 0.

    Then, we show that there exist at most $4(n-2k+3)$ solutions in $R_1$ with crowding distance larger than 0. The proof procedure is similar to that of Lemma~\ref{lem:sms-upper-ojzj},
    and the main difference is that we need to compute the crowding distance of a solution instead of the hypervolume loss. 
    For solutions with the same objective vector, they are crowded together when they are sorted according to some objective function in the crowding distance assignment procedure. Thus, one of these solutions can have crowding distance larger than 0 only if it is located in the first or the last position. Note that \ojzj\ has two objectives, thus at most four of these solutions can have crowding distance larger than 0. Therefore, at most $4(n-2k+3)$ solutions in $R_1$ can have crowding distance larger than 0, instead of $n-2k+3$ (i.e., the size of the Pareto front) in the proof of Lemma~\ref{lem:sms-upper-ojzj}.
    
    In Algorithm~\ref{alg:nsga-non-popdate}, $\lfloor 3\mu/2\rfloor$ solutions are selected for competition, and $\lfloor \mu/2\rfloor\ge 4(n-2k+3)$ of them will not be removed. Note that the solutions with smaller rank and larger crowding distance are preferred, thus $\bmx^*$ is among the best $4(n-2k+3)$ solutions in $R_1$, implying $\bmx^*$ must be maintained in the next population. Thus, the lemma holds.
\end{myproofd} 

\begin{myproof}{Theorem~\ref{thm:nsga-sto-ojzj}}
    We divide the optimization procedure into two phases, where the first phase starts after initialization and finishes when all the objective vectors in $F_{I}^*$ are found, and the second phase starts after the first phase and finishes when $0^n$ and $1^n$ are also found. 
    The analysis of the first phase is similar to that of Theorem~\ref{thm:sms-sto-ojzj}. The main difference is that the probability of selecting a specific parent solution is changed from $1/\mu$ to 1 because any solution in the current population will generate an offspring solution by line~4 of Algorithm~\ref{alg:nsgaii}, and Lemma~\ref{lem:nsga-sto-ojzj} is used here. Then, we can derive that the expected number of generations of phase 1 is $O(n\log n+k^k)$. 
    
    For the second phase, we will show that the expected number of generations for finding $1^n$ is at most $k(4e^2n/k)^k/2$, and the same bound holds for finding $0^n$ analogously. To find $1^n$, we consider a stage of consecutive $k$ generations: in the $i$-th ($1\le i\le k$) generation, a solution with $n-k+i$ 1-bits is generated from  a solution with $n-k+i-1$ 1-bits and the new solution is maintained in the next population. Note that each parent solution will be used for mutation, the probability of generating a solution with $n-k+i$ 1-bits from a solution with $n-k+i-1$ 1-bits is $((k-i+1)/n)\cdot (1-1/n)^{n-1}\ge (k-i+1)/(en)$, and the probability of maintaining the new solution in the population is at least $1/4$ by Lemma~\ref{lem:stochastic}. Thus, the above sequence of events can happen with probability at least $\prod_{i=1}^k (k-i+1)/(4en)=k!/(4en)^k\ge \sqrt{2\pi k}(k/e)^k/(4en)^k=\sqrt{2\pi k}(k/(4e^2n))^k$, where the inequality is by Lemma~\ref{lem:stirling}. Therefore, $1^n$ can be found in at most $(4e^2n/k)^k/\sqrt{2\pi k}$ expected number of stages, i.e., $\sqrt{k/(2\pi) }(4e^2n/k)^k$ expected number of generations because the length of each stage is $k$. Thus, the expected number of generations of phase~2 is at most $O(\sqrt{k}(4e^2n/k)^k)$. 
    

    Combining the two phases, the expected number of generations for finding the whole Pareto front is $O(n\log n+k^k)+O(\sqrt{k}(4e^2n/k)^k)$. If $k>8e^2$, then $O(n\log n+k^k)+O(\sqrt{k}(4e^2n/k)^k)=O(\sqrt{k}(n/2)^k)$, because $k$ is smaller than $n/2$ by Definition~\ref{def:ojzj}; if $k\le 8e^2$, then $O(n\log n+k^k)+O(\sqrt{k}(4e^2n/k)^k)=O(\sqrt{k}(n/2)^k)$ obviously holds, because $k$ can be viewed as a constant (note that $k$ is also not smaller than 2 by Definition~\ref{def:ojzj}). Thus, the theorem holds.
\end{myproof}

From the above proof, we can find that the idea of ``lucky way'' makes the proof easier compared to that of Theorem~\ref{thm:sms-sto-ojzj}. Then, a natural question is that whether such method can be used to prove Theorem~\ref{thm:sms-sto-ojzj}. Unfortunately, SMS-EMOA only generates one solution in one generation, leading to a lower bound $(k-i+1)/(\mu en)$ on the probability of generating the desired offspring solution, instead of $(k-i+1)/(en)$. Then, there would be an extra item $\mu^k$ in the total expected number of generations, which can be very large for $\mu\ge 2(n-2k+4)$.
To resolve this issue, we may view the $\mu$ generations of SMS-EMOA as an entirety, and then the item $\mu$ can be removed. However, in this case, the solution with the most number of 1-bits needs to be maintained in the population in such $\mu$ generations, leading to a very small probability bound (approximately $1/2^\mu$) of generating a solution with more 1-bits. Thus, the total expected number of generations derived by this approach is still very large.

Comparing the results of Theorems~\ref{thm:nsga-lower-ojzj} and~\ref{thm:nsga-sto-ojzj}, we can find that when $k=n/2-\Omega(n)$ and $8(n-2k+3)\le \mu=poly(n)$, using the stochastic population update can bring an acceleration of $\Omega(2^k/(\mu \sqrt{k}))$, which is exponential for $k=\Omega(n)$. In fact, comparing the lower bound $3n^k/(2(4/(e-1)+o(1)))$ derived in~\cite{doerr2023lower} with Theorem~\ref{thm:nsga-sto-ojzj}, we can even find a better acceleration of $\Omega(2^k/\sqrt{k})$ given $\mu=c(n-2k+3)$ for some $c$ such that $c \geq 8$ and $ck^2=o(n)$. The main reason for the acceleration is similar to that of SMS-EMOA. That is, introducing randomness into the population update procedure allows dominated solutions, i.e., solutions with number of 1-bits in $[1\dots k-1]\cup[n-k+1\dots n-1]$, to be included into the population with some probability, thus making NSGA-II generate the two extreme Pareto optimal solutions $0^n$ and $1^n$ much easier.

\subsection{Analysis of NSGA-II Solving \RRMO}

Now we analyze NSGA-II solving the \rrmo\ problem. 
We prove in Theorem~\ref{thm:nsga-lower-rrmo} that the lower bound on the expected number of generations of NSGA-II using the original population update in Algorithm~\ref{alg:nsgapopdate} solving \rrmo\ is   $\Omega(n^{n/5-1}/\mu)$. 
Next, we prove in Theorem~\ref{thm:nsga-sto-rrmo} that by using the stochastic population update in Algorithm~\ref{alg:nsga-non-popdate} to replace the original population update procedure in Algorithm~\ref{alg:nsgapopdate}, NSGA-II can solve \rrmo\ in $O(\sqrt{n}(20e^2)^{n/5})$ expected number of generations, implying an exponential  acceleration.

It has been proved in Theorem~8 of~\cite{dang2023crossover} that NSGA-II using binary tournament selection requires at least $n^{\Omega(n)}$ generations in expectation to find any Pareto-optimal solution on \rrmo, which also applies to NSGA-II analyzed in this paper. However, to compare the results of NSGA-II using the deterministic and stochastic population update, we present a more precise result in Theorem~\ref{thm:nsga-lower-rrmo}.
The proof idea of Theorem~\ref{thm:nsga-lower-rrmo}  is similar to that of Theorem~\ref{thm:sms-lower-rrmo}, i.e., all the solutions in the initial population have at most $3n/5$ 1-bits with a large probability, and then a Pareto optimal solution can only be generated by directly mutating a solution with at most $3n/5$ 1-bits. The main difference is that NSGA-II reproduces $\mu$ solutions in each generation, implying that the probability of reproducing a Pareto optimal solution in each generation is at most $\mu (n/5+1)/n^{n/5}$ instead of $(n/5+1)/n^{n/5}$.  Thus, the total expected number of generations is at least $\Omega(n^{n/5-1}/\mu)$ instead of $\Omega(n^{n/5-1})$.

\begin{theorem}\label{thm:nsga-lower-rrmo}
    For NSGA-II solving \rrmo, if using a population size $\mu$ such that $\mu=poly(n)$, then the expected number of generations for finding the Pareto front is $\Omega(n^{n/5-1}/\mu)$.
\end{theorem}
\begin{myproofd}
    Let $A$ denote the event that all the initial solutions have at most $3n/5$ 1-bits. As in the proof of Theorem~\ref{thm:sms-lower-rrmo}, we have that event $A$ happens with probability at least $1-o(1)$ by Chernoff bounds. Next, we show that given event $A$, the expected number of generations for finding a solution in $H$ is $\Omega(n^{n/5-1}/\mu)$. Starting from the initial population, if a solution $\bmx\notin H$ with $|\bmx|_1> 3n/5$ is generated in some generation, it will be deleted because it is dominated by all solutions in the current population. Note that any solution in the population has at most $3n/5$ 1-bits, and any solution in $H$ has $4n/5$ 1-bits. Thus, given any solution $\bmx$ in the population, the probability of generating a specific solution in $H$ from $\bmx$ by mutation is at most $1/n^{n/5}$. 
    Note that the size of $H$ is $n/5+1$, thus
    the probability of generating a solution in $H$ from $\bmx$ is at most $(n/5+1)/n^{n/5}$. Given that each solution in the current population will be used for mutation, the probability of generating a solution in $H$ in one generation is at most \(\mu (n/5+1)/n^{n/5}\).
    Thus, the expected number of generations is at least $\Omega(n^{n/5-1}/\mu)$. Combining the occurring probability of event $A$, the expected number of generations for finding the Pareto front is at least $(1-o(1))\cdot \Omega(n^{n/5-1}/\mu)=\Omega(n^{n/5-1}/\mu)$.
\end{myproofd}

The proof idea of Theorem~\ref{thm:nsga-sto-rrmo} is similar to that of Theorem~\ref{thm:sms-sto-rrmo}, i.e., dividing the optimization procedure into five phases. However, for the fourth phase, we use the ``lucky way'' argument to prove an upper bound on the expected number of generations, just as the analysis of the second phase in the proof of Theorem~\ref{thm:nsga-sto-ojzj}.
\begin{theorem}\label{thm:nsga-sto-rrmo}
    For NSGA-II solving \rrmo, if using the stochastic population update in Algorithm~\ref{alg:nsga-non-popdate}, and a population size $\mu$ such that $\mu\ge 8(2n/5+1)$, then the expected number of generations for finding the Pareto front is $O(\sqrt{n}(20e^2)^{n/5})$.
\end{theorem}
Before proving Theorem~\ref{thm:nsga-sto-rrmo}, we first present Lemma~\ref{lem:nsga-sto-rrmo}, which shows that if given a proper value of $\mu$, then once a solution is found, a weakly dominating solution will always be maintained in the population. That is, the three claims in Lemma~\ref{lem:sms-upper-rrmo} also hold here.

\begin{lemma}\label{lem:nsga-sto-rrmo}
        For NSGA-II solving \rrmo,  if using the stochastic population update in Algorithm~\ref{alg:nsga-non-popdate}, and a population size $\mu$ such that $\mu\ge 8(2n/5+1)$, then \vspace{-1em}
    \begin{itemize}
        \item if a solution with $i$ ($i\le 3n/5$) 1-bits is found and no solution in $H$ has been found, then a solution with $j$ ($i\le j\le 3n/5$) 1-bits will always be maintained in the population;
        \item if a solution $\bmx\in G'$ is found and no solution in $H$ has been found, then $\bmx$ will always be maintained in the  population;
        \item if a solution in $H$  is found, then it will always be maintained in the population.
    \end{itemize}
\end{lemma}
\begin{myproofd} 
    Suppose a solution $\bmx$ with $i$ 1-bits ($1\le i\le 3n/5$) is found, and no solution in $H$ has been found. Then, $\bmx$ can be removed only if it is selected for competition in line~1 of Algorithm~\ref{alg:nsga-non-popdate} and loses in the competition. Note that in the competition, the solutions with smaller rank and larger crowding distance are preferred, thus the winning solutions cannot  be dominated by $\bmx$ because otherwise they would have larger rank than $\bmx$. Then, by the definition of \rrmo, the winning solutions must have $j$ ($i\le j\le 3n/5$) 1-bits, implying that the first claim holds.
    
    Now we consider the second claim. Suppose a solution $\bmx\in G'$ is found, and no solution in $H$ has been found. Let $C$ denote the set of solutions in $P\cup P'$ whose objective vectors are identical to that of $\bmx$ and are selected for competition in line~1 of Algorithm~\ref{alg:nsga-non-popdate} (note that $C$ is a multiset). Then, any solution in $C$ has rank 1, because $\bmx$ cannot be dominated by any other solution in $P\cup P'$ by the definition of \rrmo. When the solutions in $C$ are sorted according to some objective function, one of them (w.l.o.g., we still assume such solution is $\bmx$) will be put in the first or the last position and thus has a crowding distance larger than 0.
    By the analysis of Lemma~\ref{lem:sms-upper-rrmo}, the number of different objective vectors of the solutions in $R_1$ is at most $2n/5+1$. Meanwhile, by the analysis of Lemma~\ref{lem:nsga-sto-ojzj}, at most four solutions with the same objective vector can have crowding distance larger than 0, implying there exist at most $4(2n/5+1)$ solutions in $R_1$ with crowding distance larger than 0. Thus, $\bmx$ is among the best $4(2n/5+1)$ solutions in $R_1$. Note that in Algorithm~\ref{alg:nsga-non-popdate}, $\lfloor 3\mu/2\rfloor$ solutions are selected for competition, and $\lfloor \mu/2\rfloor\ge 4(2n/5+1)$ of them will not be removed. Since $\bmx$ is among the best $4(2n/5+1)$ solutions in $R_1$, it must be maintained in the next population, implying that the second claim holds.
    
    The proof of the third claim is almost the same as that of the second one. The only difference is that we need to change $2n/5+1$ to $n/5+1$, which will not influence the result. Thus, the lemma holds.
\end{myproofd}
\begin{myproof}{Theorem~\ref{thm:nsga-sto-rrmo}}
    Similar to the proof of Theorem~\ref{thm:sms-sto-rrmo}, we divide the optimization procedure into five phases. The analysis of the first, second, third and fifth phases is the same as that of Theorem~\ref{thm:sms-sto-rrmo}, except that the probability of selecting a specific parent solution is changed from $1/\mu$ to 1 by line 4 of Algorithm~\ref{alg:nsgaii}, and Lemma~\ref{lem:nsga-sto-rrmo} is used here. Then, we can derive that the expected number of generations of these phases is $O(n^3)$. 
    
    For the fourth phase, i.e., finding a solution in $H$ after all the solutions in $G'$ are maintained in the population, we will show that the expected number of generations is $n(20e^2)^{n/5}/10$. Consider a stage of consecutive $n/5$ generations starting from the solution $1^{3n/5}0^{2n/5}$: in the $i$-th ($1\le i\le n/5$) generation, a solution $\bmx'$ with $|\bmx'|_1=3n/5+i$ and $\forall 4n/5+1\le j\le n: x'_j=0$ is generated and maintained in the next population. Note that each parent solution will be used for mutation, the probability of generating a desired new solution is $((n/5-i+1)/n)\cdot (1-1/n)^{n-1}\ge (n/5-i+1)/(en)$, and the probability of maintaining the new solution in the population is at least $1/4$ by Lemma~\ref{lem:stochastic}. Thus, the above event can happen in a stage of consecutive $n/5$ generations with probability at least $\prod_{i=1}^{n/5} (n/5-i+1)/(4en)=(n/5)!/(4en)^{n/5}\ge \sqrt{2\pi n/5}(n/(5e))^{n/5}/(4en)^{n/5}=\sqrt{2\pi n/5}/(20e^2)^{n/5}$, where the inequality is by Lemma~\ref{lem:stirling}. Therefore, a solution in $H$ can be found in at most $(20e^2)^{n/5}/\sqrt{2\pi n/5}$ expected number of stages, i.e., $\sqrt{ n/(10\pi)}(20e^2)^{n/5}$ expected number of generations because the length of each stage is $n/5$.
    
    Thus, the expected number of generations of all the five phases is $O(n^3)+\sqrt{ n/(10\pi)}(20e^2)^{n/5}=O(\sqrt{n}(20e^2)^{n/5})$. Then, Eq.~\eqref{eq:sms-sto-total} becomes
    $$
    \expct(T)\le \expct(T\mid \neg B)+\expct(T\mid B)\cdot \Pr(B)\le O(\sqrt{n}(20e^2)^{n/5})+en^{2n/5}\cdot 2^{-\Omega(\mu n)}= O(\sqrt{n}(20e^2)^{n/5}),
    $$
    where the equality holds for $\mu=\omega(\log n)$. Note that $\mu$ is actually at least $8(2n/5+1)$ required by Lemma~\ref{lem:nsga-sto-rrmo}. Thus, the theorem holds.
\end{myproof}

Comparing the results of Theorems~\ref{thm:nsga-lower-rrmo} and~\ref{thm:nsga-sto-rrmo}, we can find that when $8(2n/5+1)\le \mu=poly(n)$, using the stochastic population update can bring an acceleration of at least $\Omega((n/(20e^2))^{n/5}/(\mu n^{3/2}))$, which is exponential. The main reason for the acceleration is similar to that of SMS-EMOA. That is, introducing randomness into the population update procedure allows dominated solutions, i.e., solutions with number of 1-bits in $[3n/5+1\dots 4n/5-1]$, to be included into the population with some probability, thus making NSGA-II find Pareto optimal solutions which are far away from sub-optimal solutions in the decision space more easily.

\section{Experiments}

In the previous sections, we have proved that the stochastic population update can bring significant acceleration for the \ojzj\ problem with large $k$. However, it is unclear whether it can still perform better for small $k$. Now we empirically examine this case here. Specifically, we compare the number of generations of SMS-EMOA and NSGA-II for solving \ojzj, when the two population update methods are used, respectively. Considering the computational cost, we set $k$ to $2$ and $3$, and the problem size $n$ from $10$ to $30$ with a step size of $5$. The population size $\mu$ of SMS-EMOA and NSGA-II is set to $2(n-2k+4)$ and $8(n-2k+3)$, respectively, as suggested in Theorems~\ref{thm:sms-sto-ojzj} and~\ref{thm:nsga-sto-ojzj}. For each $k$ and $n$, we run the algorithms 1000 times independently, and report the mean and standard deviation of the number of generations until covering the whole Pareto front, as shown in Tables~\ref{tab:ojzj-k=2} and~\ref{tab:ojzj-k=3}. We can observe that the stochastic population update can bring a clear acceleration even for small $k$.

\begin{table}[t!]\centering
	\caption{Estimated number of generations (mean and standard deviation) of SMS-EMOA/{NSGA-II} for solving the \ojzj\ problem with $k=2$.}
	\label{tab:ojzj-k=2}
	\begin{tabular}{lllccccc}\toprule
		& & & $n=10$ & $n=15$ & $n=20$ & $n=25$ & $n=30$  \\ \midrule
		\multirow{4}{*}{SMS-EMOA} & \multirow{2}{*}{Deterministic} & mean & $3272.13$ & $13820.08$ & 
		$35406.36$ & $73477.26$ & $135236.80$ \\
        & &std& 4000.02 & 14664.39 & 39505.60&75210.79& 139222.92\\
		& \multirow{2}{*}{Stochastic} & mean &1705.81 & 8205.32 & 24845.71 & 57077.32 & 106512.45 \\
        &  & std &1492.33 & 8570.46 & 26286.18 & 59268.61 & 112299.59 \\
        \midrule
        \multirow{4}{*}{NSGA-II} & \multirow{2}{*}{Deterministic} & mean&43.87 & 137.29 & 251.23 &	388.21 & 594.02 \\
         &  & std &34.96 & 98.32 &171.99 & 283.37 & 411.47 \\
		& \multirow{2}{*}{Stochastic} & mean & 34.33 & 105.25 & 218.04 & 358.30 & 537.53 \\ 
        &  & std &25.65 & 76.38 &153.61 & 250.72 & 377.74 \\
        \bottomrule
	\end{tabular} 
\end{table}

\begin{table}[t!]\centering
\small
	\caption{Estimated number of generations (mean and standard deviation) of SMS-EMOA/NSGA-II for solving the \ojzj\ problem with $k=3$.}
	\label{tab:ojzj-k=3}
	\begin{tabular}{lllccccc}\toprule
		& & & $n=10$ & $n=15$ & $n=20$ & $n=25$ & $n=30$  \\ \midrule
		\multirow{4}{*}{SMS-EMOA} & \multirow{2}{*}{Deterministic} & mean & $32769.83$ & $197595.61$ & 
		$689198.24$ & $1688586.20$ & $3821304.43$ \\
        & &std& 32324.59 & 179448.01 & 644936.84&1534428.91& 3740755.04\\
		& \multirow{2}{*}{Stochastic} & mean &10468.01 & 97944.51 & 476818.29 & 1296196.26 & 2794298.67 \\
        &  & std &8380.42 & 84999.87 & 440345.93 & 1287699.39 & 2650874.02 \\
        \midrule
        \multirow{4}{*}{NSGA-II} & \multirow{2}{*}{Deterministic} & mean&321.04 & 1627.82 & 4311.07 &	8557.22 & 15574.75 \\
         &  & std &259.88 & 1189.81 &3117.81 & 6462.04 & 11355.53 \\
		& \multirow{2}{*}{Stochastic} & mean & 155.26 & 835.68 & 2363.01 & 4909.93 & 9056.56 \\ 
        &  & std &125.51 & 610.39 &1754.81 & 3714.43 & 6774.41 \\
        \bottomrule
	\end{tabular}
\end{table}

To examine how the acceleration changes with the parameter $k$ and the problem size $n$, we plot the ratio of the average number of generations using the deterministic and stochastic population update methods, as shown in Figures~\ref{fig:ojzj-experiment} and~\ref{fig:ojzj-experiment2}. We can observe that the acceleration decreases with $n$ for SMS-EMOA, while it is relatively stable for NSGA-II. By comparing Figures~\ref{fig:ojzj-experiment} and~\ref{fig:ojzj-experiment2}, it is clear that the acceleration of both SMS-EMOA and NSGA-II becomes larger as $k$ increases from $2$ to~$3$. These empirical observations are generally consistent with the theoretical results. By comparing Theorems~\ref{thm:sms-lower-ojzj} and~\ref{thm:sms-sto-ojzj}, the acceleration of using the stochastic population update for SMS-EMOA solving OneJumpZeroJump is $\Omega(2^{k/2}/(\sqrt{k}\mu^2))$, which increases with $k$ and decreases with $n$, where $\mu$ is set to $2(n-2k+4)$ in the experiments. By comparing the lower bound $3n^k/(2(4/(e-1)+o(1)))$ derived in~\cite{doerr2023lower} with Theorem~\ref{thm:nsga-sto-ojzj}, the acceleration for NSGA-II solving OneJumpZeroJump is $\Omega(2^{k}/\sqrt{k})$, which increases with $k$ and is not related to $n$.

\begin{figure}[t!]\centering
	\begin{minipage}[c]{0.35\linewidth}\centering
		\includegraphics[width=1\linewidth]{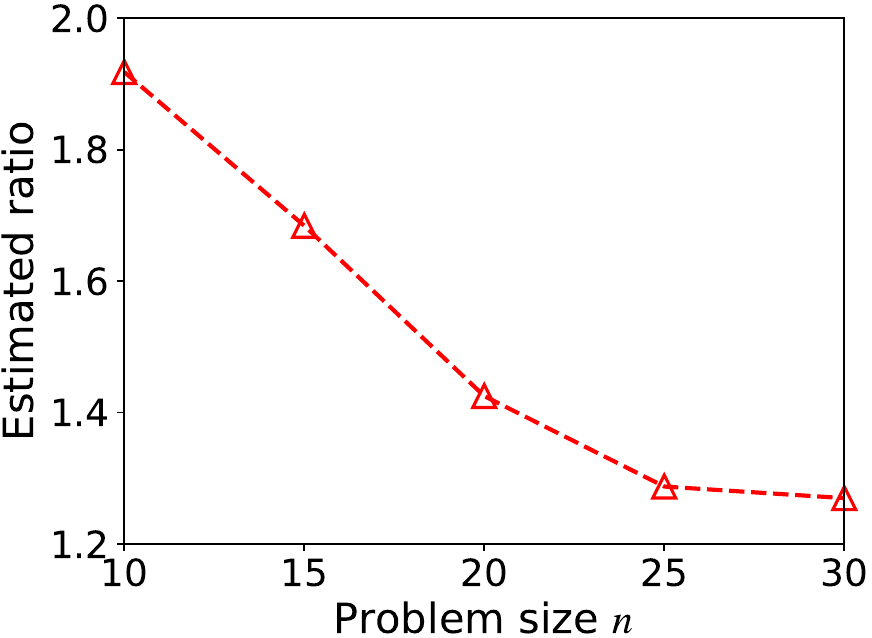}
	\end{minipage}
	\hspace{2em}
	\begin{minipage}[c]{0.35\linewidth}\centering
		\includegraphics[width=1\linewidth]{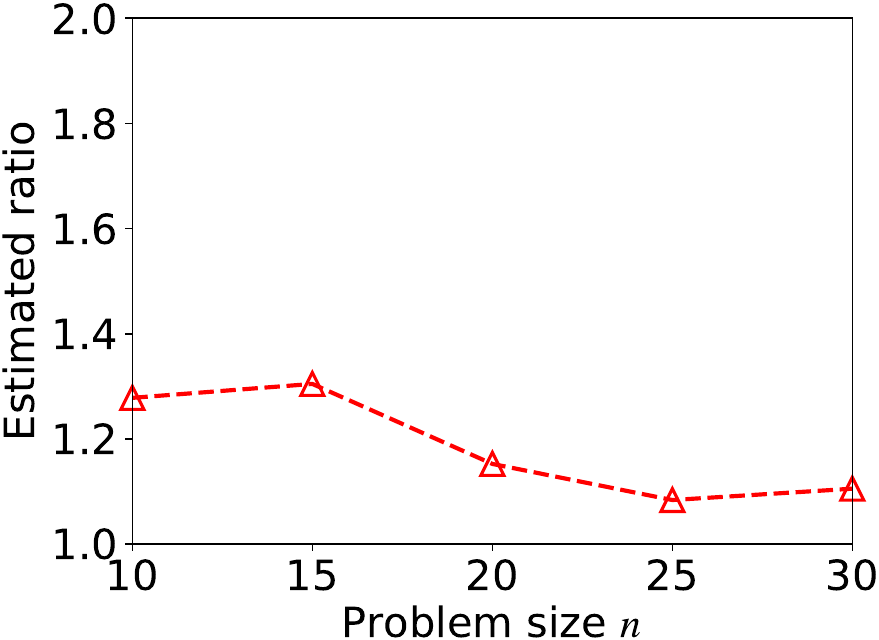}
	\end{minipage}\\\vspace{0.5em}
        \begin{minipage}[c]{0.35\linewidth}\centering
		\small(a) SMS-EMOA
	\end{minipage}
	\hspace{2em}
	\begin{minipage}[c]{0.35\linewidth}\centering
		\small(b) NSGA-II
	\end{minipage}
	\caption{Estimated number of generations of SMS-EMOA/NSGA-II using the deterministic population update divided by that using the stochastic population update for solving the \ojzj\ problem with $k=2$.}\label{fig:ojzj-experiment}
\end{figure}

\begin{figure}[t!]\centering
	\begin{minipage}[c]{0.35\linewidth}\centering
		\includegraphics[width=1\linewidth]{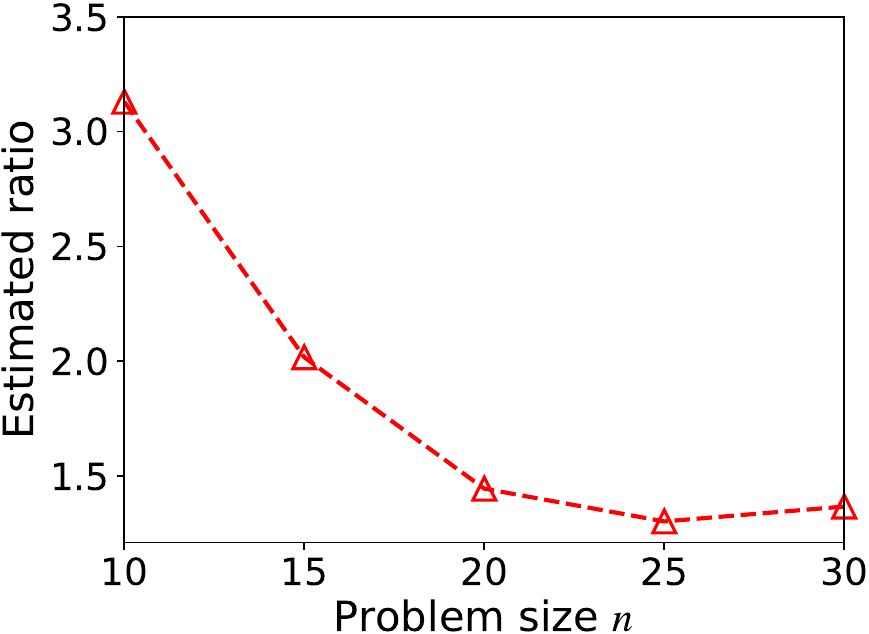}
	\end{minipage}
	\hspace{2em}
	\begin{minipage}[c]{0.35\linewidth}\centering
		\includegraphics[width=1\linewidth]{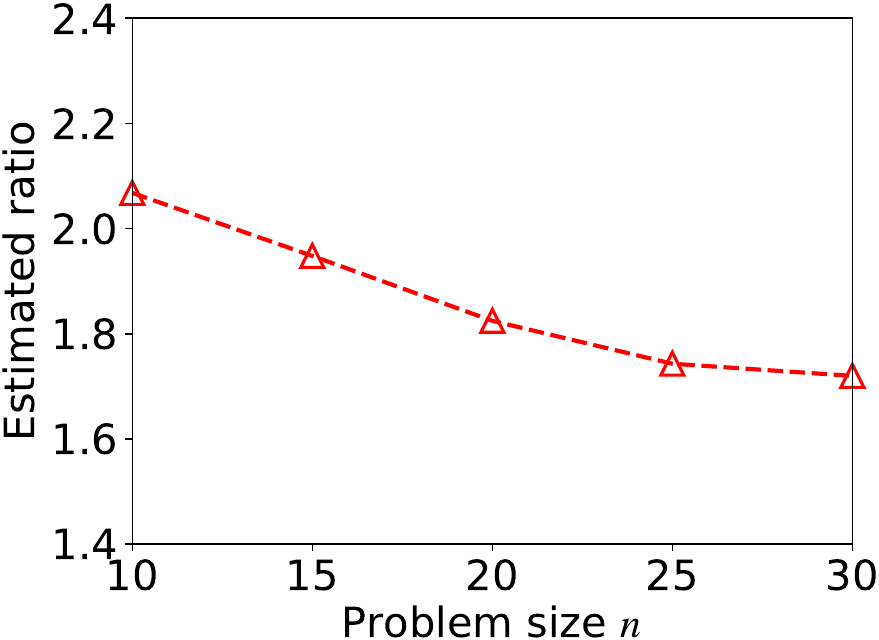}
	\end{minipage}\\\vspace{0.5em}
        \begin{minipage}[c]{0.35\linewidth}\centering
		\small(a) SMS-EMOA
	\end{minipage}
	\hspace{2em}
	\begin{minipage}[c]{0.35\linewidth}\centering
		\small(b) NSGA-II
	\end{minipage}
	\caption{Estimated number of generations of SMS-EMOA/NSGA-II using the deterministic population update divided by that using the stochastic population update for solving the \ojzj\ problem with $k=3$.}\label{fig:ojzj-experiment2}
\end{figure}

We also examine the performance of SMS-EMOA and NSGA-II solving the \rrmo\ problem empirically.
We set the problem size $n$ from $5$ to $25$ with a step size of $5$, and the population size $\mu$ of SMS-EMOA and NSGA-II to $2(2n/5+2)$ and $8(2n/5+1)$, respectively, as suggested in Theorems~\ref{thm:sms-sto-rrmo} and~\ref{thm:nsga-sto-rrmo}. For each $n$, we also run the algorithms 1000 times independently, and report the mean and standard deviation of the number of generations until covering the whole Pareto front, as shown in Table~\ref{tab:rrmo}. Figure~\ref{fig:rrmo-experiment} plots the ratio of the average number of generations using the deterministic and stochastic population update methods. We can observe that the acceleration drastically increases with the problem size $n$, which is consistent with the theoretical results. By comparing Theorems~\ref{thm:sms-lower-rrmo} and~\ref{thm:sms-sto-rrmo}, the acceleration of using the stochastic population update is $\Omega(2^{n/10}/(\mu^2n^{3/2}))$ for SMS-EMOA solving \rrmo. By comparing Theorems~\ref{thm:nsga-lower-rrmo} and~\ref{thm:nsga-sto-rrmo}, the acceleration is $\Omega((n/(20e^2))^{n/10}/(\mu n^{3/2}))$ for NSGA-II solving \rrmo. Both of these accelerations increase with $n$.

\begin{table}[t!]\centering
	\caption{Estimated number of generations (mean and standard deviation) of SMS-EMOA/NSGA-II for solving the \rrmo\ problem.}\label{tab:rrmo}
	\label{tab:rrmo}
	\begin{tabular}{lllccccc}\toprule
		& & & $n=5$ & $n=10$ & $n=15$ & $n=20$ & $n=25$  \\ \midrule
		\multirow{4}{*}{SMS-EMOA} & \multirow{2}{*}{Deterministic} & mean & $43.32$ & $704.22$ & 
		$6572.01$ & $202557.58$ & $10792477.20$ \\
        & &std& 41.58 & 355.69 & 5105.68&195991.44& 10498644.41\\
		& \multirow{2}{*}{Stochastic} & mean &45.71 & 702.19 & 5746.85 & 144221.73 & 5797042.77 \\
        &  & std &40.29 & 382.81 & 3984.84 & 136350.24 & 5900702.93 \\
        \midrule
        \multirow{4}{*}{NSGA-II} & \multirow{2}{*}{Deterministic} & mean&1.66 & 26.84 & 142.56 &	1858.21 & 73001.02 \\
         &  & std &1.68 & 17.06 &68.34 & 1659.51 & 69174.31 \\
		& \multirow{2}{*}{Stochastic} & mean & 1.81 & 25.09 & 120.81 & 723.53 & 10757.40 \\ 
        &  & std &1.64 & 14.13 &56.64 & 550.42 & 10480.93 \\
        \bottomrule
	\end{tabular}
\end{table}

\begin{figure}[t!]\centering
	\begin{minipage}[c]{0.35\linewidth}\centering
		\includegraphics[width=1\linewidth]{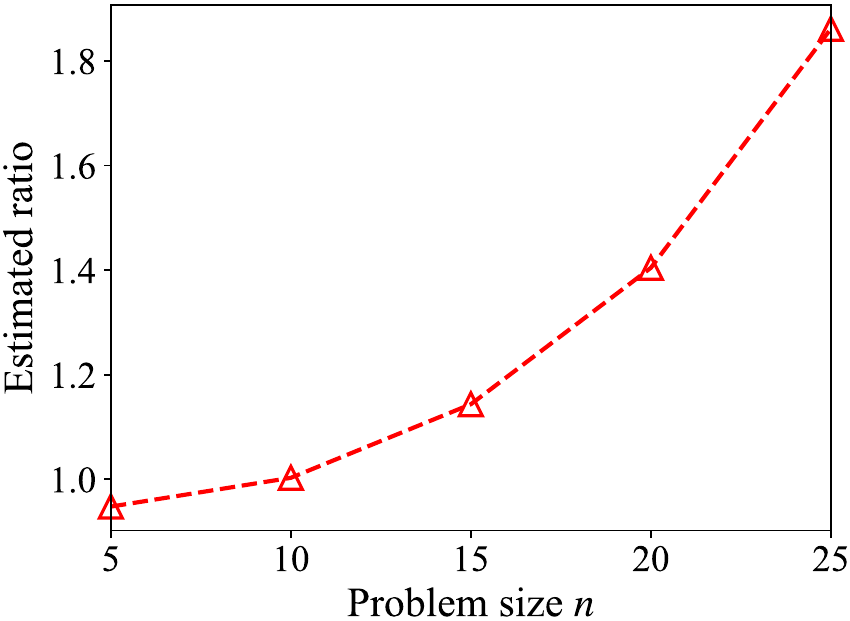}
	\end{minipage}
	\hspace{2em}
	\begin{minipage}[c]{0.35\linewidth}\centering
		\includegraphics[width=1\linewidth]{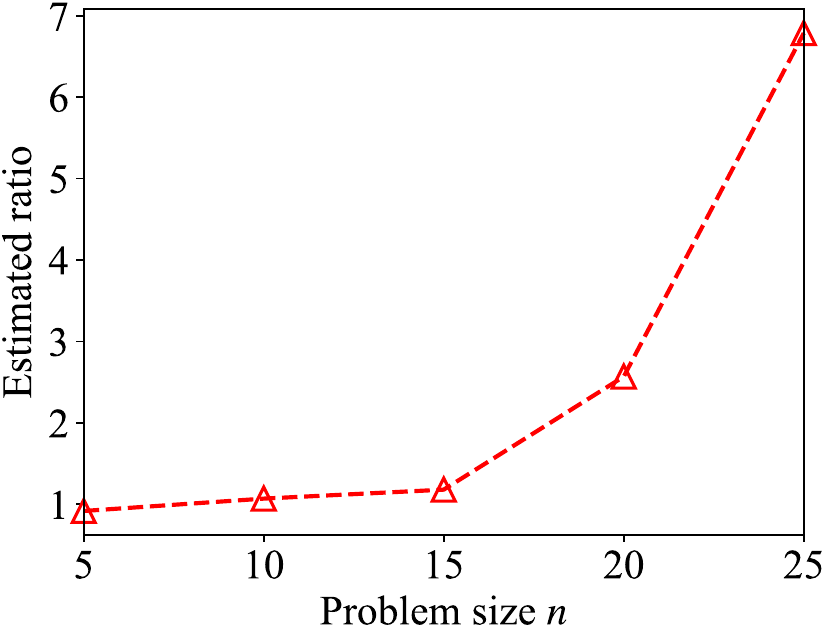}
	\end{minipage}\\\vspace{0.5em}
        \begin{minipage}[c]{0.35\linewidth}\centering
		\small(a) SMS-EMOA
	\end{minipage}
	\hspace{2em}
	\begin{minipage}[c]{0.35\linewidth}\centering
		\small(b) NSGA-II
	\end{minipage}
	\caption{Estimated number of generations of SMS-EMOA/NSGA-II using the deterministic population update divided by that using the stochastic population update for solving the \rrmo\ problem. }\label{fig:rrmo-experiment}
\end{figure}


In addition, it is worth mentioning that in the process of ranking solutions, non-dominated sorting, computing the hypervolume contribution and computing crowding distances also require significant computational effort. The stochastic population update method, which only compares part of the solutions, can make the algorithm more efficient. Table~\ref{table:cpu} shows the average CPU running time per generation for SMS-EMOA and NSGA-II solving the OneJumpZeroJump and \rrmo\ problems, which is calculated by averaging the total CPU time of running an algorithm 10,000 generations on an AMD Ryzen 9 3950X CPU (16 cores/CPU). We can observe that using the stochastic population update can always significantly reduce the running time. The acceleration achieved through stochastic methods is less effective for NSGA-II compared to SMS-EMOA. This is mainly because NSGA-II using the stochastic population update (as shown in Algorithm~\ref{alg:nsga-non-popdate}) selects $\lfloor 3\mu/2 \rfloor$ solutions from the $2\mu$ parent and offspring solutions for comparison, while SMS-EMOA using the stochastic population update (as shown in Algorithm~\ref{alg:sms-non-popdate}) selects $\lfloor (\mu+1)/2 \rfloor$ solutions from the $\mu+1$ parent and offspring solutions for comparison. That is, a larger proportion of the parent and offspring solutions is used for comparison for NSGA-II, thus leading to a smaller acceleration. For each problem under the same setting, we can observe that the average computation time per iteration of NSGA-II is much longer than that of SMS-EMOA, which is because NSGA-II uses a population size much larger than that of SMS-EMOA (e.g., $8(n-2k+3)$ vs. $2(n-2k+4)$ for \ojzj, and $8(2n/5+1)$ vs. $2(2n/5+2)$ for \rrmo) as we introduced before, and adopts a $(\mu+\mu)$ population update mode, in contrast to the $(\mu+1)$ update employed by SMS-EMOA. By comparing the running time of each algorithm across different problems, we can also observe that the running time for OneJumpZeroJump with \(k=3\) is shorter than that with \(k=2\), while the running time for \rrmo\ is the shortest. This is because the required population size for \rrmo\ is the smallest, and the required population size for OneJumpZeroJump with \(k=3\) is smaller than that with \(k=2\). Specifically, for SMS-EMOA, the population size is set to $2(n-2k+4)$ for \ojzj\ and $2(2n/5+2)$ for \rrmo; for NSGA-II, the population size is set to $8(n-2k+3)$ for \ojzj\ and $8(2n/5+1)$ for \rrmo.


\begin{table}[t!]
\centering
\caption{Average running time per generation of SMS-EMOA/NSGA-II when using the deterministic and stochastic population update.}
\label{table:cpu}
\begin{tabular}{llcccc}
\toprule
\multirow{2}{*}{Problem} & \multirow{2}{*}{Setting} & \multicolumn{2}{c}{SMS-EMOA} & \multicolumn{2}{c}{NSGA-II} \\
\cmidrule(lr){3-4} \cmidrule(lr){5-6}

& & Deterministic & Stochastic & Deterministic & Stochastic \\
\midrule
OneJumpZeroJump & $n=20$, $k=2$  & 0.050119s & 0.0135756s & 0.208095s & 0.119588s \\
OneJumpZeroJump & $n=20$, $k=3$ & 0.049554s & 0.013520s & 0.206478s & 0.118746s \\
RealRoyalRoad &  $n=20$ & 0.011349s & 0.003439s & 0.044920s & 0.026977s \\
\bottomrule
\end{tabular}
\end{table}

\section{Conclusion}

Existing well-established MOEAs usually update their population in a deterministic manner to select the best solutions. In this paper, we, through rigorous theoretical analysis, show that stochastic population update can be beneficial for the search of MOEAs. We prove that for the well-known SMS-EMOA and NSGA-II, introducing randomness into the population update procedure can significantly decrease the expected running time by enabling the evolutionary search to go along inferior regions close to Pareto optimal regions. More specifically, the stochastic population update proposed in this paper together with a large enough population size allows MOEAs to keep the non-dominated objective vectors found so far and also gives all search points an additional chance to survive. We hope our findings can inspire the design of new practical MOEAs, especially those being able to jump out of local optima more easily, which has been recently shown to be a major problem for existing MOEAs~\cite{li2023moeas}. Meanwhile, we hope this work can motivate more theoretical works considering randomness or non-elitism for MOEAs, which is a largely underexplored topic in the evolutionary theory community, especially considering that there has been many theoretical works on non-elitist single-objective EAs, e.g.,~\cite{lehre2011fitness,dang2016runtime,dang2021escaping,doerr2021lower,lehre2024more,oliveto2018escape}.

\section{Acknowledgements}

We want to thank the associate editor and anonymous reviewers for their helpful comments and suggestions, which have helped improve the work a lot. We also want to thank one anonymous reviewer of IJCAI'23 who suggested us to use the ``lucky way'' proof method, and Shengjie Ren for helpful discussions. This work was supported by the National Science and Technology Major Project (2022ZD0116600), the National Science Foundation of China (62276124), and the Fundamental Research Funds for the Central Universities (14380020).

\bibliographystyle{plain}
\bibliography{population-update}

\end{document}